%% file: main.tex
\documentclass[10pt]{article} %
\usepackage[preprint]{tmlr}

\input{math_commands.tex}

\usepackage[utf8]{inputenc} %
\usepackage[T1]{fontenc}    %

\usepackage{url}            %
\usepackage[bookmarks,colorlinks=true,citecolor=violet,linkcolor=red,urlcolor=blue]{hyperref}
\usepackage{booktabs}       %
\usepackage{amsfonts}       %
\usepackage{nicefrac}       %
\usepackage{microtype}      %

\usepackage{graphicx}
\usepackage{amsmath}
\usepackage{amssymb}
\usepackage{bbding}
\usepackage{pifont}
\usepackage{bbm}
\usepackage{soul}
\usepackage{multirow}
\usepackage{framed}
\usepackage{amsthm}
\usepackage{makecell}
\usepackage{subfigure}
\usepackage{wrapfig}
\usepackage{verbatim}
\usepackage{mathtools} 
\usepackage{bm}
\usepackage{caption}  

\usepackage{comment}
\usepackage{xcolor}
\usepackage{enumitem}
\usepackage{algpseudocode}
\usepackage{minitoc}
\usepackage{ulem}

\usepackage[capitalize,noabbrev]{cleveref}

\crefname{appendixsec}{Appendix}{Appendices}
\Crefname{appendixsec}{Appendix}{Appendices}
\crefname{appendixsubsec}{Appendix}{Appendices}
\Crefname{appendixsubsec}{Appendix}{Appendices}
\crefname{appendixsubsubsec}{Appendix}{Appendices}
\Crefname{appendixsubsubsec}{Appendix}{Appendices}

\usepackage{listings}

\usepackage{thmtools}
\usepackage{thm-restate}

\theoremstyle{plain}

\theoremstyle{definition}
\newtheorem{definition}{Definition}

\theoremstyle{remark}

\usepackage{tabularx} 
\newcommand{\defeq}{\stackrel{\text{def}}{=}}
\makeatletter
\newtheoremstyle{namedtheoremstyle}%
{\topsep}{\topsep}
{\itshape}{}
{\bfseries}{.}
{0.5em}
{\thmname{\@ifempty{#3}{#1}\@ifnotempty{#3}{#3}}}
\makeatother

\theoremstyle{namedtheoremstyle}

\usepackage[textsize=tiny]{todonotes}

\usepackage{titletoc}

\definecolor{mine}{RGB}{205, 232, 248}

\definecolor{modcolor}{RGB}{0,0,0}

\newif\ifreview
\reviewfalse %

\definecolor{revisedcolor}{RGB}{0, 100, 180} %
\definecolor{revisedcolorgreen}{RGB}{0, 140, 90} %

\ifreview
    \newcommand{\rbl}[1]{{\color{revisedcolor}#1}} %
    \newcommand{\rgl}[1]{{\color{revisedcolorgreen}#1}} %
\else
    \newcommand{\rbl}[1]{#1} %
    \newcommand{\rgl}[1]{#1} %
\fi

\title{One Model for All: Multi-Objective Controllable Language Models}

\author{\name Qiang He \email Qiang.He@ruhr-uni-bochum.de \\
      \addr Ruhr University Bochum 
      \AND
      \name Yucheng Yang \email y.yang@tue.nl \\
    \addr Eindhoven University of Technology
      \AND
      \name Tianyi Zhou \email tianyi.zhou@mbzuai.ac.ae\\
      \addr Mohamed bin Zayed University of Artificial Intelligence
      \AND
      \name Meng Fang \email Meng.Fang@liverpool.ac.uk\\
      \addr University of Liverpool
      \AND
      \name Mykola Pechenizkiy \email m.pechenizkiy@tue.nl\\
      \addr Eindhoven University of Technology
      \AND 
      \name Setareh Maghsudi \email Setareh.Maghsudi@ruhr-uni-bochum.de\\
      \addr Ruhr University Bochum}

\def\month{03}  %
\def\year{2026} %
\def\openreview{\url{https://openreview.net/forum?id=qAM5PmvFYY}} %

\begin{document}

\maketitle

\input{text/abs}

\input{text/introduction}

\input{text/method}

\input{text/exp}
\input{text/related_work}

\input{text/conclusion}

{
	\hypersetup{breaklinks=true}
	\bibliography{reference}
	\bibliographystyle{tmlr}
}

\newpage
\appendix

\crefalias{section}{appendixsec}
\crefalias{subsection}{appendixsubsec}
\crefalias{subsubsection}{appendixsubsubsec}

\section*{Appendix Table of Contents}
\startcontents[appendix]
\printcontents[appendix]{}{1}{\setcounter{tocdepth}{2}}

\newpage
\input{appendix/proof_of_theorem_1}

\input{appendix/pareto_optimality}
\input{appendix/normalized_vector_similarity}
\input{appendix/algorithms_code}
\input{appendix/loss_of_rl}
\input{appendix/further_discussion_about_related_work}
\input{appendix/experiments_illustration_fishwood}

\input{appendix/language_model_exp_illustration}
\input{appendix/generalize_to_unseen_preference_exp}

\input{appendix/llama3_exp}

\input{appendix/o_three_obj_results}

\input{appendix/6-obj-exp}
\input{appendix/ablation_objectives_eq7}
\input{appendix/case_study}

\end{document}

%% file: math_commands.tex
\usepackage{amsmath,amsfonts,bm}
\usepackage{algorithm}

\def\eqref#1{equation~\ref{#1}}

\def\1{\bm{1}}

\DeclareMathAlphabet{\mathsfit}{\encodingdefault}{\sfdefault}{m}{sl}
\SetMathAlphabet{\mathsfit}{bold}{\encodingdefault}{\sfdefault}{bx}{n}

%% file: text/abs.tex
\begin{abstract}
Aligning large language models (LLMs) with human preferences is critical \rbl{for} enhancing LLMs' safety, helpfulness, humor, faithfulness, etc. Current reinforcement learning from human feedback (RLHF) mainly focuses on a fixed reward learned from average human ratings, which may weaken the adaptability and controllability of varying preferences. However, creating personalized LLMs requires aligning LLMs with individual human preferences, which is non-trivial due to the scarce data per user and the diversity of user preferences in multi-objective trade-offs, varying from emphasizing empathy in certain contexts to demanding efficiency and precision in others. \textit{Can we train one LLM to produce personalized outputs across different user preferences on the Pareto front?} 
In this paper, we introduce \underline{M}ulti-\underline{O}bjective \underline{C}ontrol (MOC), which trains a single LLM to directly generate responses in the preference-defined regions of the Pareto front. 
Our approach introduces multi-objective optimization (MOO) principles into RLHF to train an LLM as a preference-conditioned policy network. 
We improve the computational efficiency of MOC by applying MOO at the policy level, enabling us to fine-tune a 7B-parameter model on a single A6000 GPU.
Extensive experiments demonstrate the advantages of MOC over baselines in three aspects: (i) controllability of LLM outputs w.r.t. user preferences on the trade-off among multiple rewards; (ii) quality and diversity of LLM outputs, measured by the hyper-volume of multiple solutions achieved; and (iii) generalization to unseen preferences.  
These results highlight MOC's potential for real-world applications requiring scalable and customizable LLMs. \looseness-1
\end{abstract}

%% file: text/introduction.tex
\section{\label{sec: introduction}Introduction}
Large language models (LLMs) have gained significant attention for their impressive performance across a wide range of tasks, including machine translation~\citep{transformer,gpt1,bert}, text generation~\citep{llama2,gpt4}, and conversational agents~\citep{rlhf,hh-rlhf}. However, these models are generally aligned with fixed preferences predetermined by developers~\citep{rlhf,llama2,bai2023qwen,llama3}, limiting the degree of personalization available to users. In real-world scenarios, users often have diverse preferences for LLMs behaviors. For example, individuals often seek varying trade-offs between traits like empathy and task-oriented efficiency depending on the immediate context. Despite this variability, the inherent flexibility of current LLMs~\citep{llama3,gpt4} is limited in \rbl{adjusting to fine-grained, personalized interaction preferences~\citep{Guan2025ASO, Li2025PersonalizedRJ}}.

The ability of LLMs to dynamically balance the trade-offs between different objectives according to diverse user-specified preferences is called \rbl{\textit{multi-objective controllability}~\citep{Guan2025ASO, Li2025PersonalizedRJ}}, a crucial feature for enhancing user satisfaction. Training separate models for each preference order, however, is neither practical nor scalable due to the high computational costs \rbl{associated with LLM training}. This highlights the need to enable one-time LLMs training while accommodating a broad range of preferences.

\begin{table*}[h]
	\centering
	\caption{\label{tab: comparison MOC with other baselines}Comparison with the state-of-the-art MOO methods. MOC addresses MOO in a principled and efficient manner. $M$-- the number of preferences, $N$-- the number of reward models (objectives).\looseness-1}
	\resizebox{\textwidth}{!}{
		\begin{tabular}{lccccc}
			\toprule
			Algorithms & Explicit policy improvement & Num of trained LLMs & Inference adaptation & Preference data & Loss \\ \midrule
			MORLHF & $\checkmark$ & $M$ & $\times$ & No & PPO \\ 
			Rewarded Soups~\citep{rewardedsoup} & $\times$ & $N$ & $\checkmark$ & No & PPO \\ 
			MODPO~\citep{modpo} & $\checkmark$ & $M$ & $\times$ & Yes & DPO \\ 
			RiC~\citep{ric} & $\times$ & 1 & $\checkmark$ & No & SFT \\
			\midrule
			MOC (Ours) & $\checkmark$ & 1 & $\checkmark$ & No & PPO \\
			\bottomrule
		\end{tabular}
	}
\end{table*}

Can we \textbf{control} the trade-offs in a single, once-trained LLM to meet diverse human preferences? Our answer is \textbf{yes}. This paper aims to (i) enable LLMs to generate customized responses for diverse user preferences and (ii) achieve this goal with a once-trained model. To this end, we introduce a novel algorithm, \underline{M}ulti-\underline{O}bjective \underline{C}ontrol (MOC), which leverages a carefully designed multi-objective optimization (MOO) algorithm. MOC \rbl{requires training only once}, incorporates explicit policy improvement, and does not rely on human preference data. Moreover, its training cost is comparable to single-objective RLHF~\citep{rlhf}. We further improve the computational efficiency of MOC \rbl{by integrating LoRA~\citep{lora}, making it feasible to fine-tune a 7B-parameter model on a single A6000 GPU.}

We first formulate multi-objective controllability as an MOO problem with preference vector constraints, inspired by recent advancements in MOO~\citep{MGDA,mgda_nips,sdmgrad}. This formulation presents two primary challenges. The first is identifying the target to be controlled. Existing MOO methods typically focus on optimizing different loss functions~\citep{cagrad,famo} or linearized utility functions~\citep{moqlearning}, which do not effectively capture the quality or behavior of LLMs. 
In contrast, MOC selects the reward signal as the control target, enabling direct manipulation of the model's behavior. The second challenge is to solve this optimization problem within feasible computational limits. Our formulated optimization problem involves complex trade-offs among multiple objectives under different preference constraints. To address this, we relax the problem into a new form of MOO, where the preference constraint is treated as an additional objective. MOC scalarizes the objectives with dynamic weighting in different steps, ensuring that the computational cost is comparable to the widely used single-objective RLHF~\citep{rlhf}. \cref{tab: comparison MOC with other baselines} presents a detailed comparison of MOC and baseline methods.

In extensive experiments, MOC consistently outperforms baseline methods~\citep{rlhf,rewardedsoup,ric} across multiple tasks. It demonstrates strong performance in three key areas: (i) controllability, as it effectively aligns model behavior with diverse preference vectors and ensures a clear monotonic relationship between input preferences and outcomes; (ii) quality of solution set, measured by the hyper-volume metric, where MOC achieves a superior Pareto front while maintaining a diverse set of solutions; and (iii) generalization, as it robustly handles unseen preferences while preserving both the alignment quality and diversity. Compared to baseline methods, MOC offers a more efficient and flexible approach to personalizing LLMs, managing different trade-offs among multiple objectives within a single model and seamlessly adapting to new preferences. These results demonstrate MOC's suitability for applications that require scalable and customizable personalization.

Our contributions are as follows: (i) We introduce the MOC algorithm, which requires comparable computation as single-objective RLHF and finetunes LLMs only once to accommodate diverse user preferences; (ii) We empirically demonstrate MOC's superior performance in terms of controllability, solution quality, and generalization, including its ability to generalize to unseen user preferences.

%% file: text/method.tex
\section{Multi-Objective Controllable Language Models}

Modern LLM alignment is commonly implemented via RLHF~\citep{rlhf}: a reward model is first fit from preference pairs by maximizing
\[
\mathcal{L}_{\mathrm{RM}}
=\mathbb{E}_{(x,y^w,y^l)\sim\mathcal{D}}
\!\Big[\log \sigma\!\big(r(x,y^w)-r(x,y^l)\big)\Big],
\]
and the policy is then optimized against this learned reward, typically with PPO~\citep{PPO},
\begin{equation}\label{eq: PPO obj}
    \arg\max_{\pi(\cdot\mid x;\theta)}
\mathbb{E}_{x\sim\mathcal{D},\,y\sim\pi(\cdot\mid x)}\!\left[
r(x,y)-\beta\log\frac{\pi(y\mid x;\theta)}{\pi_{\mathrm{old}}(y\mid x)}
\right].
\end{equation}
While effective, this pipeline implicitly commits to a \emph{single}, developer-chosen average preference. The resulting model is well-aligned but not easily \emph{controllable}: at inference it fails to natively adapt when users prioritize different objectives (e.g., helpfulness vs.\ conciseness, safety vs.\ creativity). We thus ask whether a model trained \emph{once} can be steered to different trade-offs on demand. Our answer is \textbf{yes}: we aim to (i) support a spectrum of user preferences, and (ii) achieve this within one training run.

\textbf{Controllability vs. Alignment.} We use the term \emph{controllability} to mean the model systematically varies its behavior in response to user-specified preferences, producing outputs consistent with those preferences. By contrast, \emph{alignment} refers to optimizing toward a fixed, global preference unchanged at inference.

\subsection{Problem Formulation}
To represent user preferences, we define a preference vector $\mathbf{p}=[p_1, p_2, \cdots, p_N]$ such that $\sum_{i=1}^N p_i=1$ and $p_i \geq 0$, where each element in $\mathbf{p}$ reflects the importance of a specific objective. Inspired by recent work on multi-objective learning~\citep{xu2020icml,ma2020icmlparato2020,yang2022iclr}, we use this preference vector to regulate the model's output in the objective space. \rbl{Given a distance or divergence metric $\Phi$ and a constraint threshold $\phi$, we require the policy's deviation from the preference vectors $\{\mathbf{p}^{i}\}_{i=1}^M$ to be upper bounded by some constant $\phi$. The training of a controllable LLM is thus formulated as the following constrained optimization problem:}
\begin{equation}
	\label{eq: alignment setting}
	\begin{aligned}
		&\max_\theta \mathbf{J} (\pi (\cdot; \theta, \mathbf{p})) \defeq \max_\theta \bigl( J^1(\pi (\cdot; \theta, \mathbf{p})),  J^2(\pi (\cdot; \theta, \mathbf{p})), \cdots, J^N(\pi (\cdot; \theta, \mathbf{p})) \bigr)^\top,  \\
		& \text{s.t.  } \Phi \Bigl(\pi (\cdot; \theta, \mathbf{p}) \Bigm\| \mathbf{p} \Bigr) \leq \phi, \text{   } \forall \mathbf{p}\in \{\mathbf{p}^{1}, \mathbf{p}^{2}, \cdots, \mathbf{p}^{M}\}, 
	\end{aligned}
\end{equation}
where $J^i$ denotes the RLHF objective associated with reward $R^i$. \Cref{eq: alignment setting} ensures LLMs align with a set of preferences. The LLM is a policy $\pi$ parameterized by $\theta$ and takes a preference vector $\mathbf{p}$ as an input condition. In addition, $\Phi$ is a distance or divergence metric between the policy $\pi$ and the preference vector $\mathbf{p}$. \rbl{In this work, we will later instantiate $\Phi$ as a mean squared error (MSE) between the expected reward vector and the preference vector (see \cref{eq: MOC alignment constraints}).} In this paper, the objective $J^i$ is typically selected as a PPO loss~\citep{PPO,rlhf}, unless stated otherwise.

Traditional methods for solving constrained optimization problems, such as the Lagrangian method, are inefficient for handling the complexity of \cref{eq: alignment setting} due to multiple constraints, diverse preferences, and the high dimensionality of LLM parameters. This limitation renders developing new solutions imperative.
\subsection{What Should the Preference Vector Align With?}
Existing multi-objective learning methods~\citep{moqlearning,famo,cagrad} typically focus on balancing multiple loss functions. However, RL loss does not reflect the agent's true performance and is therefore not suitable as the target for controllability. In contrast, the value function or episodic return provides a reliable performance measure. In the context of RLHF for LLMs, the reward is evaluated by a reward model, which serves as the episodic return. Therefore, we choose a multi-dimensional reward signal as the primary target for control. To maintain simplicity, we select MSE as the similarity metric between the reward signal and the preference vector. Formally, the constraint in \cref{eq: alignment setting} is specified as
\begin{equation}
\label{eq: MOC alignment constraints}
\begin{aligned}
		\Phi \Big(\pi (\cdot; \theta, \rgl{\mathbf{p}^{i}}) \| \rgl{\mathbf{p}^{i}} \Big) \defeq MSE \Big( \mathbb{E}_{x\sim \mathcal{D}} \mathbf{R}(x,y), \rgl{\mathbf{p}^{i}} \Big) \leq \phi, \ \rgl{\forall i\in \{1,2,\cdots,M\}},
\end{aligned}
\end{equation}
where $x$ represents the prompt/query, $y \sim \pi(x;\theta, \mathbf{p}) $ is LLM-generated response, and $\mathcal{D}$ is the prompt dataset. The reward vector $\mathbf{R}(x,y) = (R^1(x,y), R^2(x,y), \cdots, R^N(x,y))$ is associated with the $N$ optimization objectives $\{J^i\}_{i=1}^N$. The sampled response $y$ depends on the policy parameters $\theta$, enabling the optimization of $\mathbf{J}(\pi)$ with respect to $\theta$ via standard RLHF. \cref{eq: MOC alignment constraints} enforces that the reward vector $\mathbf{R}(x,y)$ aligns closely with the preference vector $\mathbf{p}$. In other contexts such as typical RL settings, the value function can be the target of control. Further details are provided in \cref{app sec: loss function in RL}.

\textbf{Re-labeling the prompt.} In MOC, the policy $\pi$ takes an additional condition: the user's preference vector $\mathbf{p} = [p_1, p_2, \cdots, p_N]$. To accommodate this, we modify the original prompt by prepending the preference vector to it, as follows:
\begin{equation}
	\label{eq: relabel the prompt}
	\begin{aligned}
		&\text{Re-labeled prompt} = \langle \text{R1} \rangle p_1 \langle \text{R2} \rangle p_2  \cdots \langle \text{RN} \rangle p_N \{\text{prompt}\}.
	\end{aligned}
\end{equation}
\subsection{Multi-Objective Controllability}
To solve the multi-objective learning problem with inequality constraints, we introduce our MOC algorithm, which builds on recent advances of multi-objective learning~\citep{MGDA, mgda_nips}. MOC simultaneously optimizes all the objectives while maximizing the alignment between the objective value vector and the preference vector. For simplicity, we optimize the following similarity objective:
\begin{equation}
\label{eq: MOC control loss}
	\max_\theta J^{\Phi} \defeq  \max_\theta - ReLU \Big( MSE \big( \mathbb{E}_{x\sim \mathcal{D},\,y\sim \pi(x;\theta,\rgl{\mathbf{p}})} \mathbf{R}(x,y), \rgl{\mathbf{p}} \big) - \phi \Big), \ \rgl{\forall \mathbf{p}\in \{\mathbf{p}^{1}, \mathbf{p}^{2}, \cdots, \mathbf{p}^{M}\}},
\end{equation}
where $ReLU(x) = \max(x, 0)$ penalizes constraint violations when the error exceeds the threshold $\phi$. This ensures that optimization respects the trade-offs between rewards and preferences. \rbl{For brevity, we denote the expectation over the data and policy distribution $\mathbb{E}_{x\sim\mathcal{D},y\sim\pi(x;\theta,\rgl{\mathbf{p}})}$ simply as $\mathbb{E}$.} \rgl{For notational simplicity, in the rest of this paper we write $\mathbf{p}$ to denote an arbitrary element of the finite preference set $\{\mathbf{p}^{i}\}_{i=1}^M$ (i.e., we omit the superscript $i$ when it is clear from context). This is purely a notational convention: $\mathbf{p}$ always refers to an element of $\{\mathbf{p}^{i}\}_{i=1}^M$. When a statement is required to hold for all preference vectors, we will write it explicitly as $\forall \mathbf{p}\in\{\mathbf{p}^{i}\}_{i=1}^M$. In practice, we sample preference vectors from this set during training.} The gradient of \cref{eq: MOC control loss} can be approximated as
\begin{equation}
	\label{eq: MOC control loss approximation}
\nabla_\theta \text{ReLU} (\text{MSE}( \mathbb{E} \mathbf{R}(x,y), \mathbf{p} ) - \phi ) = \mathbf{1}_{\text{MSE}( \mathbb{E} \mathbf{R}(x,y), \mathbf{p}) - \phi > 0} \sum_{k=1}^{N} [ (\mathbb{E} R^k - p_k) \nabla_\theta \mathbb{E} R^k(x,y) ],
\end{equation}
where $\mathbf{1}_{(\cdot)}$ is the indicator function, $R^k$ represents the $k$-th entry of $\mathbf{R}$, $p_k$ means the $k$-th entry of preference vector $\mathbf{p}$. The term $\mathbb{E}_{x\sim \mathcal{D}, y\sim \pi(x;\theta,\mathbf{p})} R^k(x,y)$ aims at maximizing the corresponding reward, which is also the goal of the gradient of the PPO loss. Thus, one could use the PPO objective $\nabla_\theta J^k(\pi (\cdot; \theta, \mathbf{p}))$ to compute $\mathbb{E}_{x\sim \mathcal{D}, y\sim \pi(x;\theta,\mathbf{p})} R^k(x,y)$.

Solving the original optimization problem in \rgl{\cref{eq: alignment setting}} is computationally challenging due to the involvement of $N$ objectives and $M$ preferences. Thus, we \rgl{relax and} reformulate it as
\begin{equation}
	\label{eq: moc bi-objective form}
	\begin{aligned}
		&\max_\theta \widehat{\mathbf{J}} (\pi (\cdot; \theta, \mathbf{p})) \defeq \max_\theta \biggl(
		{\mathbf{p}}^\top \mathbf{J} (\pi (\cdot; \theta, \mathbf{p})), -\text{ReLU}\Bigl(\text{MSE}\bigl( \mathbb{E}_{x\sim \mathcal{D}} \mathbf{R}(x,y), \mathbf{p} \bigr) - \phi\Bigr)
		\biggr)^\top,
	\end{aligned}
\end{equation}
where $\mathbf{J} (\pi (\cdot; \theta, \mathbf{p}))$ is defined in \rgl{\cref{eq: alignment setting}}. \rgl{\cref{eq: moc bi-objective form} is a relaxation of \cref{eq: alignment setting}, not a strictly equivalent reformulation. The second objective is a hinge penalty on the constraint residual $\mathrm{MSE}(\mathbb{E}\mathbf{R},\mathbf{p})-\phi$, which is maximized at $0$.} This reformulation offers two significant advantages: (i) It significantly reduces optimization complexity by transforming the original $N$-objective optimization into a bi-objective optimization; (ii) It retains control over the preference vectors in the newly formulated optimization problem. Scalarization simplifies the problem even further:
\begin{equation}
	\label{eq: MOC scalarized optimization objective}
	\begin{aligned}
		\max_\theta \biggl\{
		&c^{(1)}{\mathbf{p}}^\top \mathbf{J} (\pi (\cdot; \theta, \mathbf{p})) - c^{(2)} \text{ReLU}\Bigl(\text{MSE}\bigl( \mathbb{E}_{x\sim \mathcal{D}} \mathbf{R}(x,y), \mathbf{p} \bigr) - \phi\Bigr) \Big| \sum_{i=1}^{2} c^{(i)} = 1, \ c^{(i)} \geq 0
		\biggr\},
	\end{aligned}
\end{equation}
where $c^{(i)}$ is an $i$-objective related \rgl{coefficient}, determined by solving a min-norm problem
\begin{equation}
	\label{eq: original min norm of MOC}
	\begin{aligned}
		\min_{c^{(1)}, c^{(2)}} \biggl\{
		&\left\| c^{(1)}{\mathbf{p}}^\top \nabla_\theta \mathbf{J} (\pi (\cdot; \theta, \mathbf{p})) \right. \left. - c^{(2)} \nabla_\theta \text{ReLU}\Bigl(\text{MSE}\bigl( \mathbb{E}_{x\sim \mathcal{D}} \mathbf{R}(x,y), \mathbf{p} \bigr) - \phi\Bigr) \right\|_2^2 \Big| \sum_{i=1}^{2} c^{(i)} = 1, \ c^{(i)} \geq 0
		\biggr\}.
	\end{aligned}
\end{equation}
As demonstrated by \citet{MGDA}, either: (i) The solution to this min-norm problem is zero, meaning it satisfies the KKT conditions; or (ii) it yields a gradient direction that improves all objectives.

\subsection{Scalable Multi-Objective Control of LLMs with Surrogate Objectives}
However, in the context of LLMs, directly solving this optimization remains computationally intractable due to: (i) the need to backpropagate $N+1$ times to compute the gradient for each objective; and (ii) the prohibitive computational expense of solving the min-norm problem in the gradient space for LLM parameters. 
To overcome this computational burden, we introduce a more computationally efficient surrogate, which is an upper bound to the original objective, circumventing the need for costly backpropagation operations.
\begin{restatable}{theorem}{SurrogateObjectiveMoC}
\label{theorem: SurrogateObjectiveMoC}
\rbl{Let $z(\theta) = \frac{\pi(y|x;\theta)}{\pi_{\text{old}}(y|x)}$ denote the probability ratio of PPO objective (\cref{eq: PPO obj}), and let $\epsilon$ be the clipping hyper-parameter as defined in PPO~\citep{PPO}. The upper bound of objective in \cref{eq: original min norm of MOC} is given by}
\begin{equation}
	\label{eq: moc upper bound}
	\begin{aligned}
		&\Biggl\| c^{(1)} \sum_{j=1}^N p_j I(\hat{A}_j)  - c^{(2)} \mathbf{1}_{\text{MSE}\left( \mathbb{E}_{x\sim \mathcal{D}} \mathbf{R}(x,y), \mathbf{p} \right) - \phi > 0} \sum_{j=1}^{N} (R^j - p_j) I(\hat{A}_j) \Biggr\|_2^2  \times \Bigl\| \nabla_\theta\pi (\cdot; \theta, \mathbf{p}) \Bigr\|_2^2,
	\end{aligned}
\end{equation}
where
\begin{equation}
	\label{eq: I A function in MOC}
	I(A) =
	\begin{cases}
		0, & \text{if } (A > 0 \text{ and } z > 1 + \epsilon) \\
		& \quad \text{ or } (A < 0 \text{ and } z < 1 - \epsilon), \\
		A, & \text{if } (A > 0 \text{ and } z \leq 1 + \epsilon) \\
		& \quad \text{ or } (A < 0 \text{ and } z \geq 1 - \epsilon),
	\end{cases}
\end{equation}
\begin{equation}
    \sum_{i=1}^{2} c^{(i)} = 1, \quad c^{(i)} \geq 0 \, \quad \forall i,
\end{equation}
where \rgl{$\hat{A}_j$ denotes the advantage estimate}.%
\end{restatable}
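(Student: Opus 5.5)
The plan is to write both gradients in the min-norm objective of \cref{eq: original min norm of MOC} as a scalar coefficient multiplying the common factor $\nabla_\theta\pi(\cdot;\theta,\mathbf{p})$, and then bound the norm of that product by Cauchy--Schwarz (submultiplicativity).

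\textbf{Step 1: replace each reward gradient by the gradient of a PPO surrogate.} As noted after \cref{eq: MOC control loss approximation}, we replace each $\nabla_\theta J^j$ (equivalently $\nabla_\theta\mathbb{E}R^j$) by the gradient of the clipped PPO surrogate $L^j(\theta)=\min\bigl(z(\theta)\hat A_j,\ \mathrm{clip}(z(\theta),1-\epsilon,1+\epsilon)\hat A_j\bigr)$. We then differentiate case by case.
\begin{itemize}
\item When $\hat A_j>0$ and $z>1+\epsilon$, or $\hat A_j<0$ and $z<1-\epsilon$, the clipped branch is the active minimizer. It is constant in $\theta$, so the gradient vanishes.
\item Otherwise the unclipped branch $z\hat A_j$ is active, and its gradient is $\hat A_j\nabla_\theta z=\hat A_j\nabla_\theta\pi/\pi_{\mathrm{old}}$.
\end{itemize}
Absorbing the positive constant $1/\pi_{\mathrm{old}}$, which does not depend on $\theta$, into the normalization gives
\[
\nabla_\theta L^j = I(\hat A_j)\,\nabla_\theta\pi(\cdot;\theta,\mathbf{p}),
\]
with $I$ as in \cref{eq: I A function in MOC}. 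This per-sample, per-token identity is the key computation.

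\textbf{Step 2: substitute into both terms of the min-norm objective.}
\begin{itemize}
\item The first term becomes $c^{(1)}\sum_j p_j I(\hat A_j)\nabla_\theta\pi$.
\item For the second term we use \cref{eq: MOC control loss approximation}. It becomes $c^{(2)}\mathbf{1}_{\mathrm{MSE}-\phi>0}\sum_j(R^j-p_j)I(\hat A_j)\nabla_\theta\pi$. Here the expectation $\mathbb{E}R^j$ in the residual is replaced by its sampled estimate $R^j$, and factors of $2/N$ from the MSE derivative are absorbed, as in \cref{eq: MOC control loss approximation}.
\end{itemize}
Both terms are now a scalar multiple of the same vector $\nabla_\theta\pi$, so we can factor it out. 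The squared norm then becomes
\[
\bigl\|\,(\text{scalar bracket})\cdot\nabla_\theta\pi\,\bigr\|_2^2 \le \|\text{bracket}\|_2^2\,\|\nabla_\theta\pi\|_2^2 .
\]
For a single sample this is an equality. Over a batch or expectation, where the bracket is a vector indexed by samples and tokens and $\nabla_\theta\pi$ stacks the corresponding gradients, it is an inequality obtained from Cauchy--Schwarz or the operator-norm bound $\|Bv\|\le\|B\|\|v\|$. This yields \cref{eq: moc upper bound}.

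\textbf{Main obstacle.} The delicate step is Step 1 together with the aggregation at the end of Step 2. The identification $\nabla\mathbb{E}R^j\approx\nabla L^j$ is a modeling substitution rather than an identity, so the argument will state it as the surrogate being bounded. The expectation-to-sample passage also requires the bracket and the gradient to be arranged so that the expectation of a product is bounded by a product of norms. I would handle this by writing the batch gradient as $\sum_t a_t g_t$ with $g_t=\nabla_\theta\pi(y_t\mid x_t)$ and applying the triangle inequality followed by Cauchy--Schwarz:
\[
\Bigl\|\sum_t a_t g_t\Bigr\|^2 \le \Bigl(\sum_t a_t^2\Bigr)\Bigl(\sum_t\|g_t\|^2\Bigr).
\]
This is exactly the claimed product form. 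The indicator function and the coefficients $c^{(i)}$ enter only as scalars, so they pass through unchanged.
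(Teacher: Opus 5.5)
Your argument follows the paper's proof step for step. You write each gradient as $\partial_\pi(\cdot)\,\nabla_\theta\pi$, use that the clipped PPO surrogate has $\partial_\pi L^j=I(\hat{A}_j)/\pi_{\mathrm{old}}$, pass the ReLU term through \cref{eq: MOC control loss approximation}, factor out $\nabla_\theta\pi$, and apply Cauchy--Schwarz. The paper applies Cauchy--Schwarz before substituting the PPO derivative and you apply it after, which is immaterial. Your per-token form $\|\sum_t a_t g_t\|^2\le(\sum_t a_t^2)(\sum_t\|g_t\|^2)$ states more carefully the step the paper writes in one line.

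\textbf{The gap is the removal of $1/\pi_{\mathrm{old}}$.} With $g_t=\nabla_\theta\pi(y_t\mid x_t)$, the true coefficient is $a_t/\pi_{\mathrm{old},t}$. Because $0<\pi_{\mathrm{old},t}\le 1$, you have $\sum_t a_t^2/\pi_{\mathrm{old},t}^2\ge\sum_t a_t^2$. Deleting the factor therefore makes the right-hand side smaller, so the expression you end with is not an upper bound on the min-norm objective. Already in your single-sample ``equality'' the left side equals $\pi_{\mathrm{old}}^{-2}$ times the claimed expression, so the claimed bound fails whenever $\pi_{\mathrm{old}}<1$ and the bracket and gradient are nonzero. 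In a batch, $\pi_{\mathrm{old},t}$ also varies over tokens. ``Absorbing it into the normalization'' is then a reweighting of the bracket, not a harmless rescaling. The paper's proof has the same defect: its final ``$\le$'', which drops $1/\pi_{\mathrm{old}}$, actually runs in the opposite direction.

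The fix is to keep $1/\pi_{\mathrm{old}}$ with the gradient rather than with the bracket. Take $g_t=\nabla_\theta z_t=\nabla_\theta\pi(y_t\mid x_t)/\pi_{\mathrm{old}}(y_t\mid x_t)$, and let $a_t$ be the bracket with plain $I(\hat{A}_{j,t})$. Your Cauchy--Schwarz inequality then rigorously bounds the objective by the squared bracket norm appearing in \cref{eq: moc upper bound} times $\sum_t\|\nabla_\theta z_t\|_2^2$. Bounding further, this is at most the stated expression times $(\min_t\pi_{\mathrm{old},t})^{-2}$. That second factor does not depend on $c^{(1)},c^{(2)}$, so the surrogate problem \cref{eq: surrogate min norm problem moc}, which is all the method actually uses, is unchanged.
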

The proof is deferred to \cref{app: proof of upper bound of MOC}. \Cref{theorem: SurrogateObjectiveMoC} provides an upper bound on \cref{eq: original min norm of MOC}, which yields two key advantages: (i) Both $I(\hat{A}_i) $ and $
    \mathbf{1}_{MSE ( \mathbb{E}_{x\sim \mathcal{D}} \mathbf{R}(x,y), \mathbf{p}) - \phi >0} \sum_{j=1}^{N} (R^j - p_j) I(\hat{A}_j)$ can be efficiently computed without any additional expensive back-propagation; (ii) $ \nabla_\theta\pi (\cdot; \theta, \mathbf{p})$ is no longer required by the min-norm problem as it does not depend on $c^{(i)}$.
Therefore, we achieve the following computationally efficient surrogate problem of optimizing $c^{(1)}$ and $c^{(2)}$:
\begin{equation}
	\label{eq: surrogate min norm problem moc}
	\begin{aligned}
		\min_{c^{(1)}, c^{(2)}} & \biggl\{\Bigl\| c^{(1)} \sum_{j=1}^N p_j I(\hat{A}_j) -  c^{(2)} \mathbf{1}_{\text{MSE}\left( \mathbb{E}_{x\sim \mathcal{D}} \mathbf{R}(x,y), \mathbf{p} \right) - \phi > 0} \sum_{j=1}^{N} (R^j - p_j) I(\hat{A}_j) \Bigr\|_2^2  \Big| \sum_{i=1}^{2} c^{(i)} = 1, \ c^{(i)} \geq 0, \ \forall i
		\biggr\}.
	\end{aligned}
\end{equation}
Compared to the intractable original optimization in \cref{eq: original min norm of MOC}, the surrogate optimization problem in \cref{eq: surrogate min norm problem moc} offers the following advantages: (i) \rbl{\textbf{Computational efficiency}}: The term $I(\hat{A}_i)$ can be computed through a simple forward pass in a language model without requiring gradient calculations; (ii) \rbl{\textbf{Solution efficiency}}: Note that the objective function is a quadratic function of the variables $c^{(i)}$. The general min-norm problem is solvable by the existing Frank-Wolfe algorithm~\citep{frankwolfe}, a well-established convex optimization method. \cref{eq: surrogate min norm problem moc} has a closed-form solution~\citep{mgda_nips} because \cref{eq: surrogate min norm problem moc} only involves two terms. 

As a result, the multi-objective learning problem in \cref{eq: MOC scalarized optimization objective} can be solved by iterating two steps: (i) Solving the min-norm problem in \cref{eq: surrogate min norm problem moc} to achieve the dynamic weights $\{c^{(i)}\}_{i=1}^{2}$, and (ii) Optimizing scalarized objective in \cref{eq: MOC scalarized optimization objective} with the $\{c^{(i)}\}_{i=1}^{2}$. Finally, by integrating PPO's advantage function $A$ into \cref{eq: surrogate min norm problem moc}, our MOC algorithm can train a policy taking any preference vector to control the multi-objective alignment. This algorithm is summarized in \cref{app: pseudocode}.

\rbl{\textbf{Advantages of MOC}. We highlight the benefits of our approach:}
i) \rbl{\textbf{Flexible preference handling}}: MOC can accommodate a wide range of preferences through a single training process. Its design enables adaptation to diverse preferences without repeated training. ii) \rbl{\textbf{Computational Efficiency}}: By introducing the surrogate objective in \cref{eq: surrogate min norm problem moc}, MOC significantly reduces computational costs, making its cost comparable to widely used single-objective RLHF.

\subsection{An Illustrative Example \label{sec: fishwood illustrative example}}
\begin{figure*}[!ht]
\centering
\subfigure[MOC (Ours)\label{subfig: controllability MOC fishwood}]{
\includegraphics[width=0.49\textwidth]{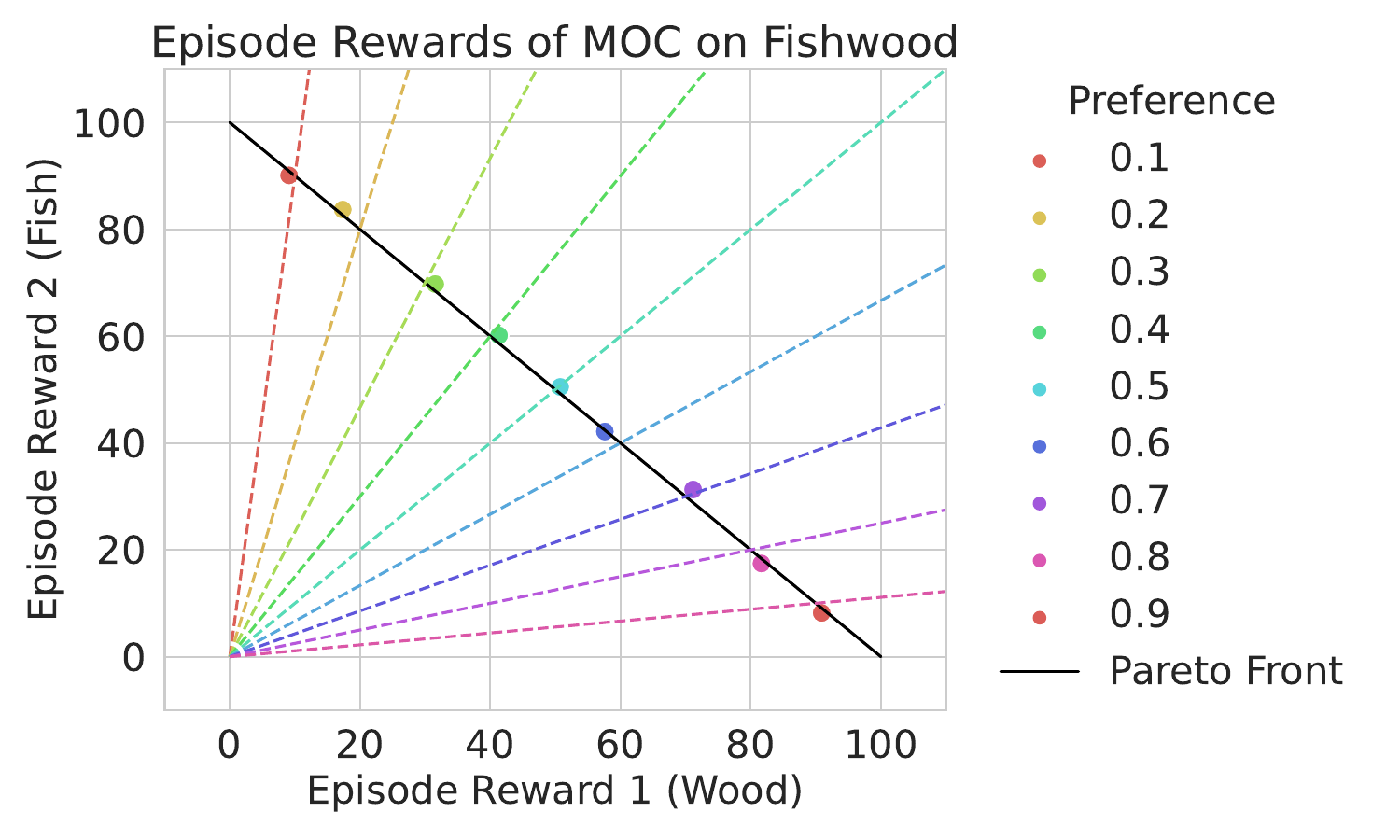}}
\hspace{-0.1in}
\subfigure[Linear PPO\label{subfig: controllability Linear PPO fishwood}]{
\includegraphics[width=0.49\textwidth]{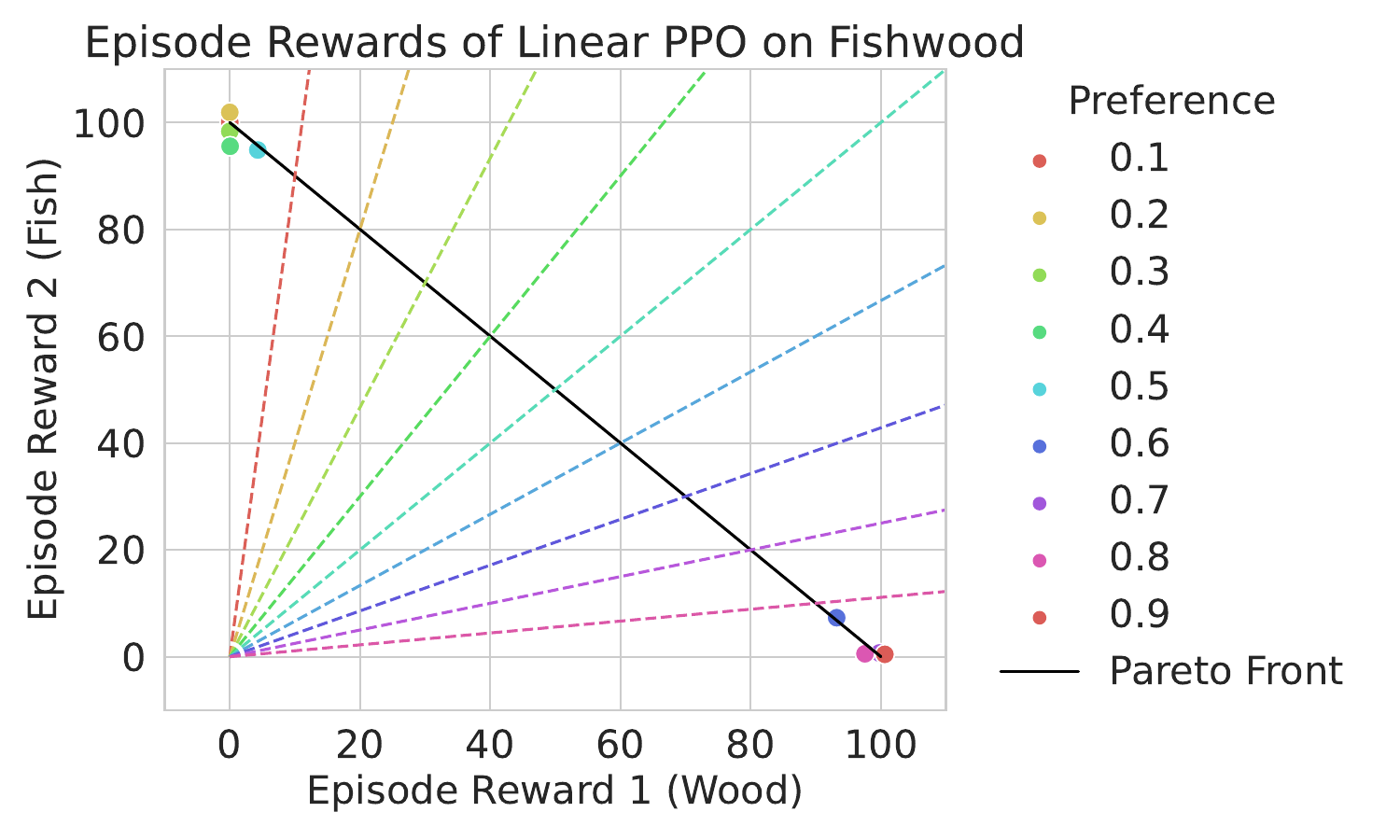}}
\vspace{-0.1in}
\caption{\label{figure: fishwood results}Solutions of MOC and Linear PPO on fishwood task and the Pareto front (line in black). MOC demonstrates advantages in both MOO (solutions lie on the Pareto front) and multi-objective control (its solutions align closely with their corresponding preference vectors, shown as the colored dashed rays). The single model trained by MOC can handle diverse preference vectors. In contrast, Linear PPO optimizes a linear scalarization of the objectives and fails to follow the preference vectors, with solutions dominated by one objective. \rgl{The legend "Preference" indicates the specific weight value assigned to Reward 1 (Wood). Linear PPO is implemented by optimizing PPO w.r.t. a scalarized reward $R_{\text{lin}}=w\,R^{\text{wood}}+(1-w)\,R^{\text{fish}}$ (with $w\in[0,1]$). In our experiments, we train Linear PPO runs with weights $[0.1,0.9],[0.2,0.8],\cdots,[0.8,0.2],[0.9,0.1]$.}}
\vspace{-0.1in}
\end{figure*}

To demonstrate the capability of our proposed MOC algorithm, we perform an illustrative experiment on the \href{https://mo-gymnasium.farama.org/environments/fishwood/}{fishwood} task~\citep{mo-gym}, where the agent controls a fisherman who can fish or gather wood, receiving corresponding rewards upon completion of each task. The rewards have two dimensions: one for gathering wood and one for fishing. Each collected wood or fished item increases the respective reward by 1. Detailed experimental setup can be found in \cref{app: sec additional experimental setting fishwood}. The results are reported in \cref{figure: fishwood results}. 
MOC aims for (i) MOO: The solutions should reach the Pareto front, meaning the points should be close to the black solid line. (ii) Multi-objective control: The solutions should align closely with their respective preference vectors, indicated by the dashed lines. 

The results demonstrate that MOC achieves both goals.
(i) Its solutions lie on the Pareto front, demonstrating successful optimization, and (ii) its solutions are close to the preference vectors, confirming effective multi-objective control.
Notably, MOC generalizes to diverse preference vectors by training only \textbf{one} model. In contrast, the Linear PPO method, which optimizes a linear scalarization of the objectives, struggles to consistently follow different preference vectors. In Linear PPO's results, one objective often dominates the other in the Pareto sense, a well-known phenomenon in convex optimization (see section 4.7 of \citet{boyd2004convex}).

%% file: text/exp.tex
\section{Experiments}
\begin{figure*}[!htbp]
	\centering
	\subfigure[Humor \& Helpful\label{subfig: controllability Harmless-Helpful}]{
		\includegraphics[width=0.49\textwidth]{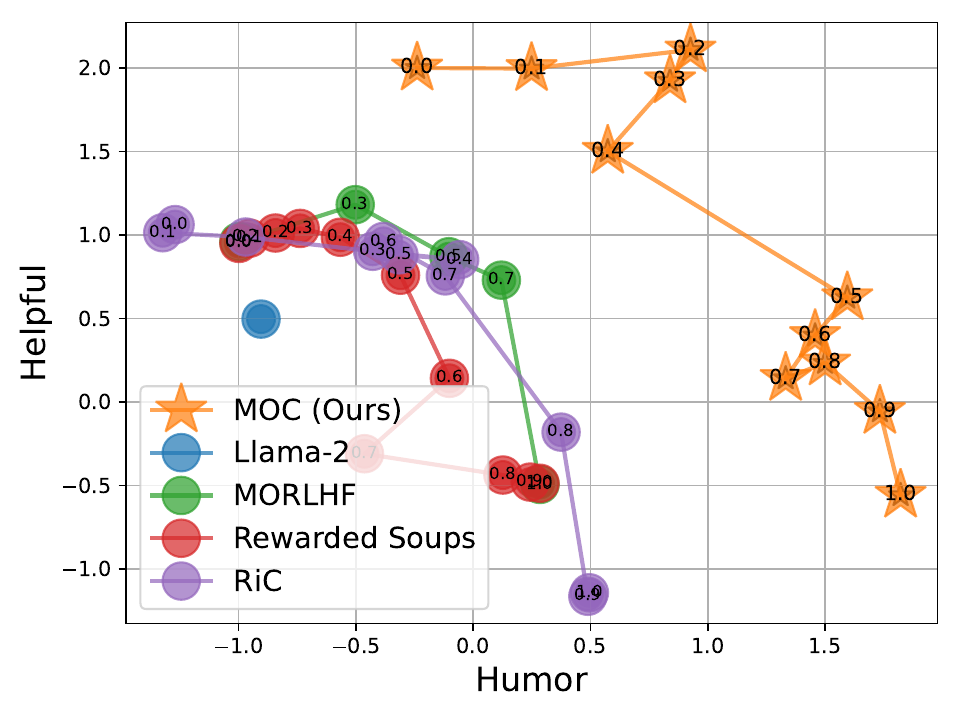}}
	\hspace{-0.1in}
	\subfigure[Harmless \& Helpful\label{subfig: Humor Harmless-Helpful}]{
		\includegraphics[width=0.49\textwidth]{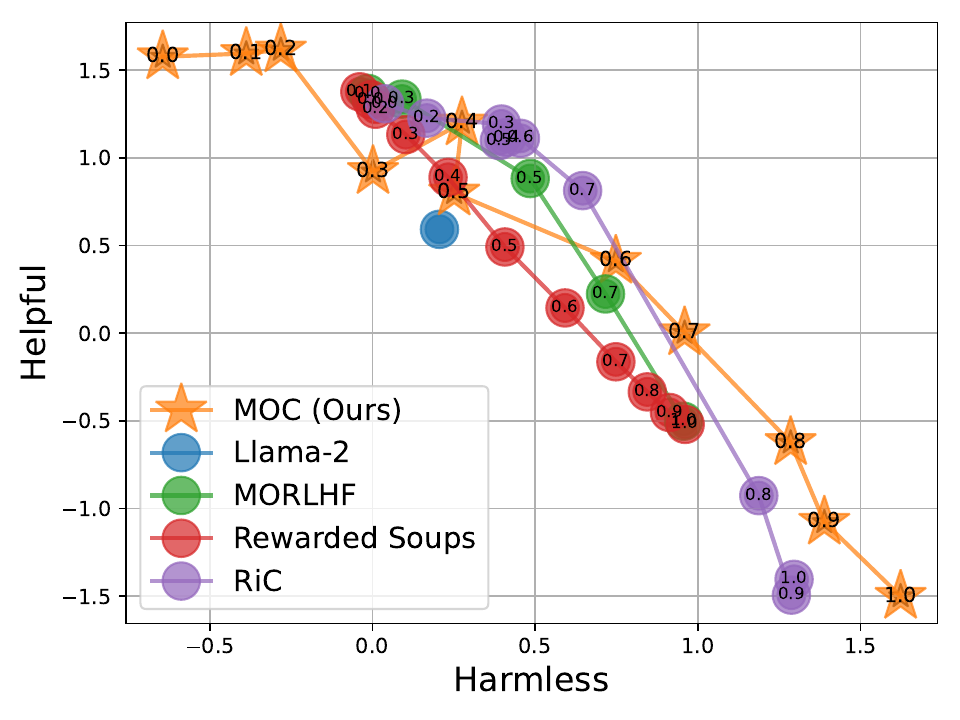}}
	 \vspace{-0.1in}
	\caption{\textbf{Controllability comparison on the Pareto front}. MOC demonstrates superior controllability, indicated by the consistent ranking of solutions on their preference weights and the achieved reward values. In comparison, the baselines exhibit less stable behavior and weaker alignment with the specified preferences. MOC also achieves higher quality solutions, particularly in the Humor \& Helpful alignment. Our MOC method achieves the best overall performance, supported by these results and the findings in~\cref{tab: kendallstau,tab: hyper volume of MOC,tab: entropy_comparison moc}. Each point represents the reward achieved across multiple instances, each with a different input preference vector. Each point's preference weight for the x-axis reward is the numerical label on its marker.\label{fig: MOC controllability}}
	\vspace{-0.1in}
\end{figure*}
In this section, we present a comprehensive evaluation of the proposed MOC method.
\subsection{Experimental Setup\label{sec: experimental setup}}
\begin{wrapfigure}{r}{0.45\textwidth}
    \vspace{-30pt}
    \centering
    \includegraphics[width=0.42\textwidth]{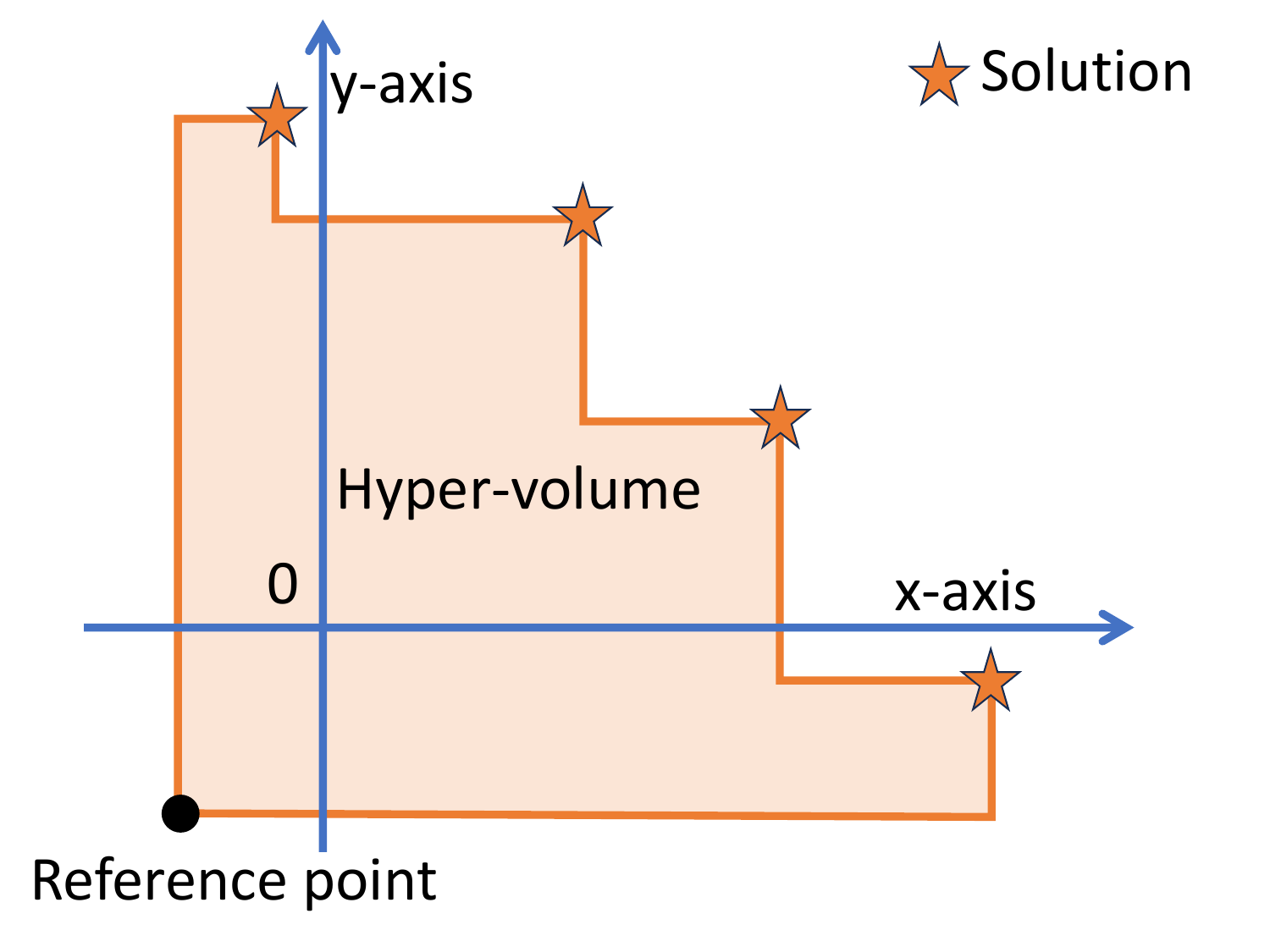}
    \caption{\label{fig:hypervolume}
        Illustration of the hyper-volume concept.
        The hyper-volume measures the size of the objective space dominated by a set of solutions in multi-objective optimization. 
        Larger hyper-volumes indicate better convergence and diversity of the Pareto front.\vspace{-15pt}}
\end{wrapfigure}

\textbf{Implementation.} Our implementation is based on the open-source TRL package~\citep{vonwerra2022trl}. For the language model, we adopt models from the Llama series \citep{llama2,llama3}, which are widely used in RLHF studies. We use the prompts from Helpful Assistant dataset \citep{hh-rlhf}, which provides data for two sets of objectives: \{``humor", ``helpful"\} and \{``harmless", ``helpful"\}. MOC is trained with a set of predefined preference vectors: $\{[0.0, 1], [0.1, 0.9], \cdots, [0.9, 0.1], [1, 0.0]\}$. The training process was performed on a desktop with an Intel i9-14900K CPU and an NVIDIA RTX A6000 GPU. MOC is trained by LoRA~\citep{lora} with a rank of 64. The language model is loaded in 8-bit due to the computational constraints. Additional experimental details are provided in \cref{app: sec additional experimental details MOC for language models}.
\rgl{While human evaluation would provide stronger evidence, we follow standard RLHF practice and use reward-model-based rankings as a proxy for human preferences, since reward models are trained from human preference pairs.}

\textbf{Baselines.}
We compare MOC against three baselines, including (i) The standard \textit{MORLHF}: a multi-objective RLHF method that scalarizes the multi-objective problem into a single objective by combining reward signals with fixed preference weights \rgl{(in our experiments, the fixed preference weights are $\{[0.0, 1.0], [0.3, 0.7], [0.5, 0.5], [0.7, 0.3], [1.0, 0.0]\}$).} \rgl{MORLHF does not condition the policy on the preference vector. These weights are only used to scalarize multiple reward-model scores into a single scalar reward, which is then passed to PPO.}
(ii) \textit{Rewarded Soups}~\citep{rewardedsoup}: Combines the model weights of $N$ separately trained models using the PPO algorithm, each optimized for a specific reward function; (iii) \textit{RiC}~\citep{ric} \rgl{learns a mapping from preference vectors to reward signals and then conditions generation by injecting the corresponding reward signals into the prompt (i.e., reward-as-context prompt conditioning). RiC is trained with SFT.} \rgl{Rewarded Soups and RiC use the same set of preference vectors as MOC.} The behavior of the base Llama-2 model is included for comparative analysis. \rgl{Further discussion of RiC and other related work is provided in \cref{app sec: Further Discussion of Related Work}.} More baselines are included in \cref{app: sec: model generalization across model types and sizes}.

\textbf{Metrics.} It is crucial to emphasize that in the context of controllability, higher rewards do \textbf{NOT} always equate to better outcomes.  Our goal is to ensure that the model's outputs align with the personalized expectations of users, rather than merely maximizing reward scores. Therefore, to evaluate the controllability comprehensively, we employ a multi-faceted approach.

The evaluation focuses on four key aspects, namely: (i) The quality of \rbl{solution set}, measured using hyper-volumes (as illustrated in \cref{fig:hypervolume}); (ii) Control with preference vectors, assessed by computing the correlation between the model's behavior and the given preferences; (iii) Diversity of solutions, evaluated by computing the mean pairwise distance of the solutions; and (iv) Generalization capabilities to unseen preference vectors. These metrics collectively provide a robust principle to evaluate the model's ability to optimize competing objectives while adhering to user-defined preferences. Additionally, we present case studies to provide qualitative insights into the controllability of MOC with user-specified preferences.

\subsection{Main Results}
\Cref{fig: MOC controllability} illustrates the results for two pairs of reward models, with coordinates representing the average rewards corresponding to different preference vectors.

Our evaluation reveals two primary findings: (i) \rbl{\textbf{Controllability}}: MOC demonstrates superior controllability compared to the baselines. This is evident in how consistently the model's behavior aligns with the rank order prescribed by the preference vectors, maintaining a clear monotonic relationship between given preferences and corresponding rewards. In contrast, MORLHF, Rewarded Soups, and RiC exhibit less consistent behavior under corresponding preference settings; (ii) \rbl{\textbf{Solution quality}}: MOC dominates the baselines in terms of overall Pareto front quality, as confirmed by the quantitative results below.

\begin{table}[ht]
\centering
\caption{\label{tab: kendallstau}Controllability comparison using Kendall's tau correlation (higher is better), with corresponding $p$-values in parentheses (smaller is better), measuring the consistency between input preferences and output rewards. MOC significantly outperforms all the baselines.}
\begin{tabularx}{\columnwidth}{lXXXX}
\toprule
Setting & MOC (Ours) & RiC & MORLHF & Rewarded Soups \\
\midrule
Humor-helpful    & 1.00 ({\scriptsize $5.0 \times 10^{-8}$}) & 0.78 ({\scriptsize $3.3 \times 10^{-4}$}) & 1.00 ({\scriptsize $1.7 \times 10^{-2}$}) & 1.00 ({\scriptsize $5.0 \times 10^{-8}$}) \\
Harmless-helpful & 1.00 ({\scriptsize $5.0 \times 10^{-8}$}) & 0.91 ({\scriptsize $3.0 \times 10^{-5}$}) & 1.00 ({\scriptsize $1.7 \times 10^{-2}$}) & 0.96 ({\scriptsize $5.5 \times 10^{-7}$}) \\
\midrule
Average          & 1.00  & 0.85  & 1.00  & 0.98 \\
\bottomrule
\vspace{-0.1in}
\end{tabularx}
\end{table}

\textbf{Alignment with preferences.} 
To assess how well different algorithms align with the prescribed preference vectors and model behavior, we adopt \emph{Kendall's tau} rank correlation~\citep{kendall1938measure} as the evaluation metric. 
Kendall's tau quantifies the degree to which the relative ordering of outputs is consistent with the rank order determined by the preference vectors, capturing the model's ability to maintain rank-preserving relationships. 
A higher Kendall's tau indicates better alignment between the model's output ordering and the target preference ordering. \rgl{Concretely, we compute an angular projection score for each output based on the angle (computed via $\arctan$) of its reward vector relative to a fixed reference point, rank outputs by this score, and then compute Kendall's tau against the preference-induced ranking (see \cref{app:kendalltau} for details).} The results are shown in~\cref{tab: kendallstau}.
Across two evaluation settings, MOC achieves the highest Kendall's tau, demonstrating its superior capability to align model behavior with user preferences and accurately reflect human preference rankings.

\textbf{Quality of solutions.} We use the hyper-volume indicator, a standard metric in MOO, to measure the quality of solutions. Hyper-volume captures both convergence to the Pareto front and the diversity of the solutions in the objective space. \cref{tab: hyper volume of MOC} shows that MOC significantly outperforms all baselines. For instance, in the Humor-Helpful setting, MOC achieves a hyper-volume of 14.176, compared to 6.692 by RiC; similar trends are observed in the Harmless-Helpful setting. These results indicate that MOC exhibits superior convergence to the Pareto front and maintains a more diverse set of solutions, ensuring that it explores a broader range of trade-offs between objectives.

\begin{table}[!htbp]
\centering
\caption{\label{tab: hyper volume of MOC}Hyper-volume (higher is better) comparison of different methods, measuring the volume of solutions dominated by each method's solution set. MOC outperforms all baselines, achieving higher solution diversity and quality. The best score is marked with the \colorbox{mine}{blue} color box.}
\begin{tabularx}{\columnwidth}{lXXXXX}
\toprule
Setting & MOC (Ours) & RiC & MORLHF & Rewarded Soups \\
\midrule
Humor-helpful & \colorbox{mine}{14.176} & 6.692 & 6.769 & 6.100 \\
Harmless-helpful & \colorbox{mine}{10.220} & 9.257 & 9.047 & 8.905 \\
\midrule
Average & \colorbox{mine}{12.198} & 7.974 & 7.908 & 7.502 \\
\bottomrule
\end{tabularx}
\vspace{-0.1in}
\end{table}
\begin{table}[!htbp]
	\centering
	\caption{\label{tab: entropy_comparison moc}Comparison of solution set pair-wise distances (measuring diversity, higher is better) across methods.}
	\begin{tabularx}{\columnwidth}{lXXXXX}
		\toprule
		Setting & MOC (Ours) & RiC & MORLHF & Rewarded Soups \\
		\midrule
		Humor-helpful & \colorbox{mine}{1.439} & 1.260 & 1.057 & 1.005\\
		Harmless-helpful & \colorbox{mine}{1.600} & 1.363 & 1.110 & 1.015 \\
		\midrule
		Average & \colorbox{mine}{1.520} & 1.312 & 1.084 & 1.010 \\
		\bottomrule
	\end{tabularx}
\end{table}

\textbf{Diversity of solutions.} We measure the diversity of solutions by computing the mean pairwise distance (MPD) of the solution set in the objective space. MPD directly quantifies how spread out the solutions are, making it particularly suitable for our setting. A higher MPD indicates greater behavioral diversity. \Cref{tab: entropy_comparison moc} shows that MOC consistently achieves the highest MPD values, outperforming all baselines. For example, in the Harmless-Helpful setting, MOC obtains an MPD value of 1.600, while RiC obtains 1.363. This result aligns with the observation in \cref{fig: MOC controllability}, where the solutions produced by RiC tend to cluster more closely, resulting in less diverse behavior.

\begin{figure*}[!ht]
\centering
\includegraphics[width=\textwidth]{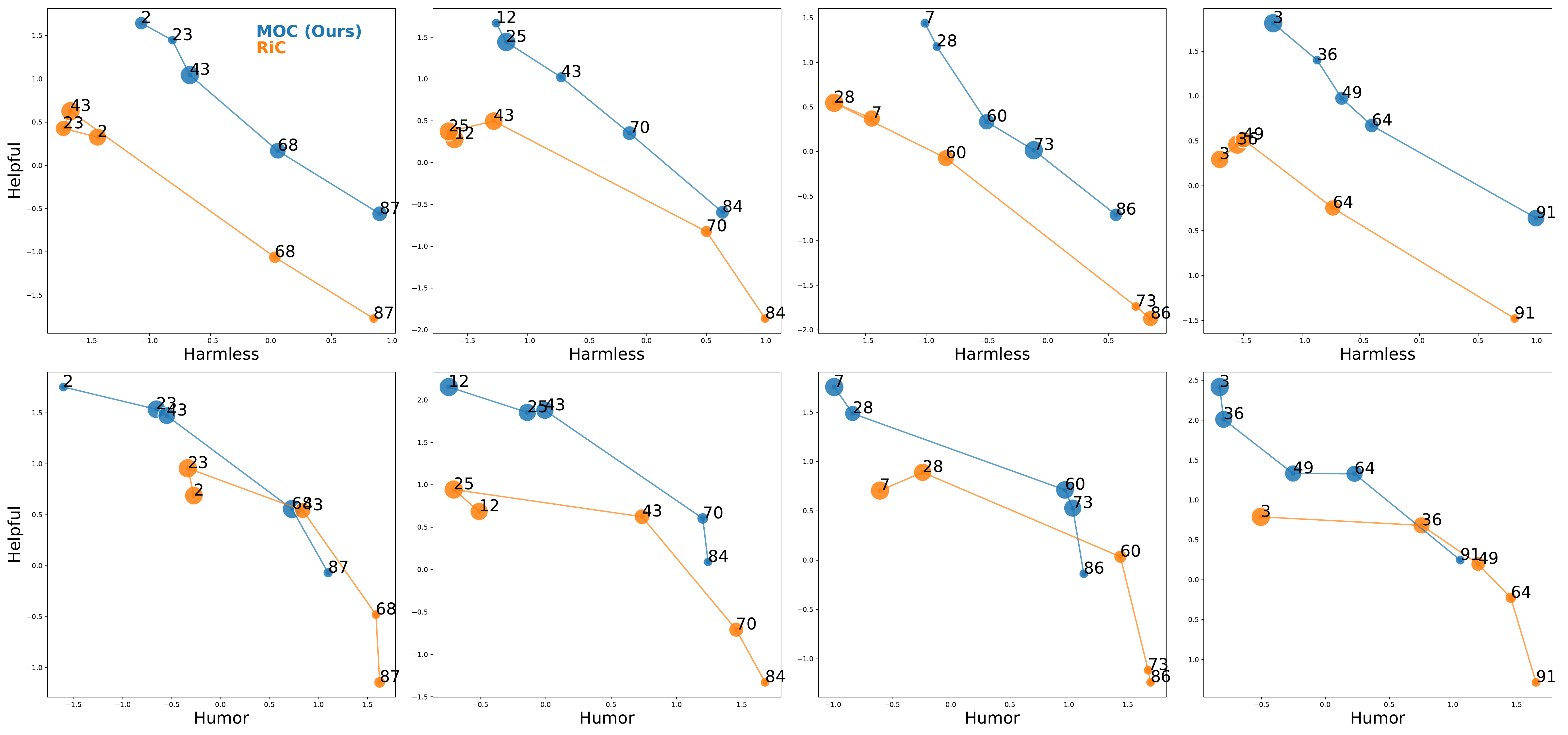}
\caption{Generalization to unseen preference vectors held out from the training. MOC and RiC-trained LLMs are compared on four random sets of unseen preference vectors. Each column corresponds to a different set of unseen preference vectors, and each row represents a different pair of reward settings. MOC solutions dominate the RiC solutions in most cases. MOC's rewards align with the new preference vectors and the outputs under different preferences are diverse in the reward space. This suggests MOC generalizes to unseen preferences and achieves diverse trade-offs on the Pareto front. The size of each point indicates the standard deviation in rewards. The numerical labels indicate the preference weights (multiplied by 100) for the reward on the x-axis, enhancing visual clarity.
\label{fig: generalize to the untrained preference vectors MOC}}
\end{figure*}

\subsection{Generalization to Unseen User Preference}

\rbl{In this work, we define ``unseen preferences'' as valid preference vectors (where $\sum p_i = 1$) sampled from the continuous probability simplex that were not present in the training set. Since our training set includes the boundary vertices of the preference simplex, evaluating on these unseen preference vectors strictly assesses the model's capability for continuous interpolation across the Pareto front, ensuring the model has learned a smooth manifold rather than discrete memorization.}
We evaluate the ability of our model to generalize to unseen preference vectors not included in the training process. Although MOC is initially trained on a predefined set of preference vectors, the goal is to determine if it can handle new, untrained preferences effectively. To test this, we uniformly sampled four sets of unseen preference vectors and provided them as inputs to the trained model for inference. The results, as depicted in \cref{fig: generalize to the untrained preference vectors MOC}, confirm that the model maintains strong performance across all tested scenarios, without any obvious degradation in its behavior. Check~\cref{app: sec moc generalization ability to untrained preference} for more details and qualitative results.

The results highlight several key advantages of the model trained by MOC: i) The model's performance does not degrade when presented with unseen preferences. ii) The model's behavior still adheres to the input preference vector, maintaining alignment between behavior ranking (represented by rewards) and preferences. iii) The model demonstrates sufficient diversity in its behavior, distributing its rewards across a broad range of outcomes rather than concentrating on a narrow region of the objective space. These results suggest that MOC can successfully accommodate a diverse range of trade-offs dictated by new preferences, even when they significantly differ from those encountered during training. 

\subsection{Additional Experiments} 
To further demonstrate MOC's capabilities, we present a series of experiments highlighting its advantages across various settings. These include: (1) its ability to generalize to untrained preference vectors (\cref{app: sec moc generalization ability to untrained preference}), (2) its performance across different model types and sizes, compared with more baselines (\cref{app: sec: model generalization across model types and sizes}), (3) its adaptability to various datasets and reward models (\cref{app sec: dataset and reward model generalization}), (4) its effectiveness in handling \rbl{more objectives (\cref{app sec: more objective generalization,app:three_obj})}, (5) ablation study (\cref{app:ablation-eq7}), and (6) a case study (\cref{app sec: case study}).
\subsection{Discussion}
The experiments reveal four key advantages of MOC. (i) \rbl{\textbf{Solution Quality}}: MOC achieves the highest solution quality, evidenced by hyper-volume, reflecting convergence and diversity.
(ii) \rbl{\textbf{Controllability}}: MOC demonstrates superior controllability, ensuring consistent alignment with user preferences across diverse objective trade-offs.
(iii) \rbl{\textbf{Solution Diversity}}: MOC outperforms baselines, confirming its robustness in capturing user preferences.
(iv) \rbl{\textbf{Generalization}}: MOC's ability to generalize to unseen preferences highlights its potential for real-world applications where new preferences may emerge.
These advantages demonstrate MOC offers a powerful and flexible approach for multi-objective controllable language models, outperforming baselines in controllability and diversity while maintaining computational efficiency.

%% file: text/related_work.tex
\section{Related Work}

The alignment of LLMs with human values is a central challenge~\citep{rlhf,hh-rlhf}, and moving beyond a single, developer-defined preference to accommodate diverse, multi-objective user needs is an open problem. We first review foundational MOO and then situate MOC within the current landscape of multi-objective control methods for LLMs.

\subsection{From Multi-Objective Optimization to Controllable LLMs}

The principles of MOO, which aim to find a set of Pareto-optimal solutions for competing objectives, are well-established~\citep{MGDA,mgda_nips}. However, applying these principles to large-scale LLMs presents unique challenges. Early MOO work in machine learning often focused on optimizing multiple loss functions simultaneously~\citep{cagrad,famo}, a paradigm that does not directly map to controlling the fine-grained, semantic behaviors of LLMs.

Closer to our domain, some methods rely on a linear scalarization of utility or reward functions~\citep{moqlearning}, a technique also employed by standard MORLHF. While simple, this approach is theoretically limited and often fails to identify solutions in non-convex regions of the Pareto front~\citep{boyd2004convex}. Other algorithms trace the full Pareto front~\citep{epo, pmgda}, but their high computational complexity makes them intractable for finetuning billion-parameter LLMs.

MOC addresses this challenge. Our work differs from these foundational methods in two ways: (i) we directly manipulate model behavior in the more meaningful reward space rather than loss space, and (ii) we introduce a novel surrogate objective (\cref{theorem: SurrogateObjectiveMoC}) that makes principled MOO computationally efficient and comparable to standard single-objective RLHF, thus overcoming the scalability bottleneck of prior work.

\subsection{Multi-Objective Control of LLMs}
Recent years have seen several methods specifically for multi-objective LLM control, which can be broadly categorized by their approach.

\textbf{Training a Single Steerable Policy.} This paradigm, which MOC belongs to, aims to train one versatile model that can be steered at inference time. The most related works use a numerical vector to specify preferences. \citet{wang-etal-2024-conditional} provides a foundational RL framework for this but, like MORLHF, relies on linear reward scalarization. \citet{GuoCY0SSCXZL0024} presents a powerful, RL-free alternative by extending DPO to handle multiple objectives, making it a key intellectual parallel to our work. RiC~\citep{ric} conditions the model on reward values via prompt engineering and uses rejection sampling. While this achieves a form of control, it lacks an explicit policy improvement mechanism, limiting its ability to push the Pareto frontier outwards. MOC, by integrating its MOO formulation directly into a PPO-based policy-gradient objective, explicitly optimizes for both alignment and reward maximization. In contrast to these, MOC employs a more sophisticated MOO gradient calculation, enabling a superior exploration of the Pareto front, as demonstrated by our hyper-volume and diversity metrics (\cref{tab: hyper volume of MOC,tab: entropy_comparison moc}). 

\textbf{Ensemble and Multi-Model Methods.} Other methods achieve diverse outputs by training or combining multiple models. Rewarded Soups~\citep{rewardedsoup} interpolates the weights of separate models, each finetuned on a single reward. Similarly, MODPO~\citep{modpo} trains M distinct models for M preferences. These approaches are computationally expensive and storage-intensive, directly contrasting with MOC's ``one model for all'' design. MOC achieves superior results with the efficiency of training a single model.

\subsection{Alternative Control Interfaces and Methodologies}
\textbf{Linguistic and Implicit Control.} Some works explore more user-friendly control interfaces. \citet{nguyen-etal-2024-multi,yang2024metaaligner} use explicit linguistic tags (e.g., [formality: high]) for control, while MOSLIM~\citep{DBLP:journals/corr/abs-2505-20336} infers preferences implicitly from the user's natural language prompt. These methods trade the precise, granular control offered by MOC's numerical vectors for enhanced interpretability or zero-shot prompting. MOC's approach is complementary and particularly suited for applications requiring precise, backend control over model attributes like safety and factuality. %

\textbf{Inference-Time Control.} Another approach is to enforce constraints during decoding. \citet{mod,DBLP:journals/corr/abs-2503-08796} pioneer methods to provide formal guarantees on objective trade-offs at inference time. This technique applies to any LLM but incurs latency and can degrade text quality if constraints are too severe. MOC, as a training-time method, internalizes these trade-offs, enabling fast, coherent, and controlled generation without specialized decoding, making it practical for deployment.

MOC provides a unique and powerful framework. It is a single, efficient, and steerable model that: (i) does not require training multiple models, unlike Rewarded Soups or MODPO; (ii) does not rely on preference datasets, unlike CPO~\citep{GuoCY0SSCXZL0024}; (iii) maintains explicit policy improvement, unlike RiC; and (iv) generalizes to unseen preference vectors, which is a primary design goal.

%% file: text/conclusion.tex
\section{Conclusion}
In this paper, we introduced MOC, a novel approach that enables the personalization of LLMs by adapting to diverse user-specified preferences. MOC addresses the limitations of existing LLMs, which are typically constrained by fixed developer-specified preferences, by formulating multi-objective controllability as a multi-objective optimization problem. By introducing surrogate optimization in RLHF, MOC enables a single fine-tuning process to adapt to a wide range of user-specified trade-offs while many existing methods require training multiple separate models. Our experiments demonstrate that MOC surpasses existing baseline methods in controllability, solution quality, and generalization while maintaining exceptional computational efficiency. By optimizing over the explicit Pareto front rather than collapsed scalar values, MOC enables precise, continuous control over multi-dimensional model behaviors. This work highlights the potential of MOC to transform how LLMs interact with users, offering scalable and customizable solutions that meet diverse needs while maintaining computational feasibility. \rbl{Looking forward, MOC paves the way for fully personalized systems. While our framework utilizes precise numeric vectors for optimization, it effectively serves as a foundational control layer compatible with intuitive user interfaces. Specifically, natural language instructions can be mapped to these continuous vectors through a lightweight translation module, connecting human intent with fine-grained model control. This modular design facilitates the integration of MOC into real-world applications where both interpretability and precision matter. Ultimately, MOC represents a significant step toward realizing fully personalized and human-friendly systems.}

\subsubsection*{Acknowledgments}
The work of Q. H. and S. M. was supported by the Federal Ministry of Research, Technology and Space (BMFTR) under Grant 16KIS2411. The work of Y. Y. and M. P. was supported by the TKI SmartTwo project. The role of Y. Y. as the primary contributor to the algorithm design and theoretical components is explicitly acknowledged here.

%% file: appendix/proof_of_theorem_1.tex
\section{\label{app: proof of upper bound of MOC} Proof of Theorem \ref{theorem: SurrogateObjectiveMoC}}
\SurrogateObjectiveMoC*

\begin{proof}
One can further expand \cref{eq: original min norm of MOC} with the PPO loss and get
\begin{equation}
 \label{eq: upper bound of min norm}
    \begin{aligned}
    & \Bigg\| c^{(1)}{\mathbf{p}}^\top \nabla_\theta \mathbf{J} (\pi (\cdot; \theta, \mathbf{p})) - c^{(2)} \nabla_\theta ReLU (MSE ( \mathbb{E}_{x\sim \mathcal{D}} \mathbf{R}(x,y), \mathbf{p}) - \phi) \Bigg\|_2^2 \\
    = & \Bigg\| c^{(1)} \sum_{j=1}^N p_j \nabla_\theta J^j (\pi (\cdot; \theta, \mathbf{p})) - c^{(2)} \nabla_\theta ReLU (MSE ( \mathbb{E}_{x\sim \mathcal{D}} \mathbf{R}(x,y), \mathbf{p}) - \phi) \Bigg\|_2^2 \\
    = & \Bigg\| c^{(1)} \sum_{j=1}^N p_j \nabla_\pi J^j (\pi (\cdot; \theta, \mathbf{p})) \nabla_\theta\pi (\cdot; \theta, \mathbf{p}) - c^{(2)} \nabla_\pi ReLU (MSE ( \mathbb{E}_{x\sim \mathcal{D}} \mathbf{R}(x,y), \mathbf{p}) - \phi) \nabla_\theta\pi (\cdot; \theta, \mathbf{p}) \Bigg\|_2^2 \\
    \leq & \Bigg\| c^{(1)} \sum_{j=1}^N p_j \nabla_\pi J^j (\pi (\cdot; \theta, \mathbf{p}))  - c^{(2)} \nabla_\pi ReLU (MSE ( \mathbb{E}_{x\sim \mathcal{D}} \mathbf{R}(x,y), \mathbf{p}) - \phi) \Bigg\|_2^2 \Bigg\| \nabla_\theta\pi (\cdot; \theta, \mathbf{p}) \Bigg\|_2^2 \\
    = & \Bigg\| c^{(1)} \sum_{j=1}^N p_j \frac{1}{\pi_{old}} I(\hat{A}_j)  - c^{(2)} \mathbf{1}_{MSE ( \mathbb{E}_{x\sim \mathcal{D}} \mathbf{R}(x,y), \mathbf{p}) - \phi >0} \sum_{j=1}^{N} (R^j - p_j)\frac{1}{\pi_{old}} I(\hat{A}_j)  \Bigg\|_2^2 \Bigg\| \nabla_\theta\pi (\cdot; \theta, \mathbf{p}) \Bigg\|_2^2  \\
     \leq & \Bigg\| c^{(1)} \sum_{j=1}^N p_j I(\hat{A}_j)  - c^{(2)} \mathbf{1}_{MSE ( \mathbb{E}_{x\sim \mathcal{D}} \mathbf{R}(x,y), \mathbf{p}) - \phi >0} \sum_{j=1}^{N} (R^j - p_j) I(\hat{A}_j)  \Bigg\|_2^2 \Bigg\| \nabla_\theta\pi (\cdot; \theta, \mathbf{p}) \Bigg\|_2^2  
    \end{aligned}
\end{equation}
where 
\begin{equation*}
    \begin{aligned}
        I(A) = 
        \begin{cases} 
            0, & \text{if } (A > 0 \text{ and } z > (1 + \epsilon)) \\
               & \text{or } (A < 0 \text{ and } z < 1 - \epsilon) \\
            A, & \text{if } (A > 0 \text{ and } z \leq (1 + \epsilon)) \\
               & \text{or } (A < 0 \text{ and } z \geq 1 - \epsilon)
        \end{cases},
    \end{aligned}
\end{equation*}
\begin{equation*}
    \sum_{i=1}^{2} c^{(i)} = 1, \quad c^{(i)} \geq 0 \, \quad \forall i,
\end{equation*}

and $z=\frac{\pi}{\pi_{old}}$. The third inequality holds by Cauchy–Schwarz inequality and the fourth equation holds by integrating the PPO loss function.
\end{proof}

%% file: appendix/pareto_optimality.tex
\newpage
\section{Pareto Optimality and MOC's Advantages}
\label{app:pareto_optimality}

In this section, we provide a formal definition of Pareto Optimality and its relevance to policy improvement.

\subsection*{Formal Definition of Pareto Optimality}

\begin{definition}
    Let $\pi, \pi' \in \mathcal{X}$, where $\mathcal{X}$ is the set of feasible solutions. A solution $\pi$ is said to \textit{dominate} another solution $\pi'$ if and only if:
\begin{itemize}
    \item $J_i(\pi) \geq J_i(\pi')$ for all $i \in \{1, 2, \ldots, N\}$, and
    \item $J_j(\pi) > J_j(\pi')$ for at least one $j \in \{1, 2, \ldots, N\}$.
\end{itemize}

Here, $J_i(\pi)$ denotes the value of the $i$-th objective for the solution $\pi$. The above conditions imply that $\pi$ performs at least as well as $\pi'$ in all objectives and strictly better in at least one. Solutions that are not dominated by any other are termed \textit{non-dominated} and collectively form the \textit{Pareto front}.
\end{definition}

\begin{definition}(Pareto Optimality)
    Let $\mathcal{X}$ denote the set of feasible solutions, and let $J: \mathcal{X} \to \mathbb{R}^N$ be a vector-valued objective function where $J(\pi) = [J_1(\pi), J_2(\pi), \ldots, J_N(\pi)]^\top$ corresponds to the objective values associated with $\pi \in \mathcal{X}$. A solution $\pi^* \in \mathcal{X}$ is \textit{Pareto optimal} if and only if no other solution $\pi' \in \mathcal{X}$ satisfies:

\begin{equation}
    J_i(\pi') \geq J_i(\pi^*) \quad \forall i \in \{1, 2, \ldots, N\}
\end{equation}
and
\begin{equation}
    J_j(\pi') > J_j(\pi^*) \quad \text{for at least one } j \in \{1, 2, \ldots, N\}.
\end{equation}
\end{definition}

This ensures that $\pi^*$ is \textit{non-dominated}, meaning that no other solution can improve one or more objectives without sacrificing performance in at least one other.

\subsection*{Advantage of MOC}

Explicit policy improvement refers to methods that deliberately optimize at least one objective $J_i$, ensuring that the solution quality improves by maximizing one or more associated rewards $R_i$. This approach is particularly crucial in designing multi-objective policies, as it guarantees measurable progress in one or more dimensions of performance.

\subsection*{Advantage of MOC Compared to Other Baselines}

Our proposed method, \textbf{MOC}, explicitly optimizes all objectives with policy improvement while integrating controllability, ensuring a more balanced and efficient approach to policy improvement. In contrast:
\begin{itemize}
    \item \textbf{Rewarded Soups} does not jointly optimize all objectives, which leads to suboptimal solutions.
    \item \textbf{RiC} focuses exclusively on controllability but lacks explicit mechanisms for policy improvement, limiting its ability to enhance solution quality.
    \item \textbf{MODPO} does not consider Pareto Optimality during training. Specifically, it trains $M$ separate LLMs (corresponding to $M$ preferences) by optimizing each model with a specific weighted combination of reward objectives, given the corresponding reward models.
\end{itemize}

By integrating both explicit policy improvement and controllability into a unified framework, \textbf{MOC} theoretically achieves higher solution quality compared to these baselines. This is further validated by our experimental results (\cref{tab: hyper volume of MOC,tab: kendallstau,tab: entropy_comparison moc,tab: comparison MOC with other baselines,app tab: llama-hypervolume-hh-rlhf,app tab: reddit summary dataset results,app tab: fruit_tree_results,fig: MOC controllability,fig: generalize to the untrained preference vectors MOC,app fig: moc with llama3-8b,app fig: MOC controllability summary reddit}), which demonstrate that \textbf{MOC} consistently outperforms these approaches across multiple metrics.

The integration of explicit policy improvement with controllability ensures that \textbf{MOC} aligns with the principles of Pareto Optimality while delivering superior practical performance. By addressing the limitations of existing methods and achieving a better balance among competing objectives, \textbf{MOC} sets a new benchmark in multi-objective controllable language models.

%% file: appendix/normalized_vector_similarity.tex
\section{Approximated Normalized Vector Similarity}
\label{app:normalized_vector_similarity}

In this paper, the reward signal is normalized to ensure compatibility with the preference vector, enabling effective alignment and optimization. The normalization process is defined as:

\begin{equation}
\label{eq:normalization}
    Normalize(r) = \frac{r - r_\text{mean}}{2 r_\text{std}} + 1,
\end{equation}

where \(r_\text{mean}\) and \(r_\text{std}\) are computed dynamically using a running mean and standard deviation~\citep{baselines}. This ensures that the range of \(Normalize(r)\) is consistent with the preference vector, a common practice in deep reinforcement learning~\citep{baselines}. 

The alignment between normalized rewards and preferences is then quantified using the MSE loss, leading to the definition of the \textbf{Approximated Normalized Vector Similarity} (AMVS):

\begin{equation}
    AMVS(r, \mathbf{p}) = \|Normalize(r) - \mathbf{p}\|^2,
\end{equation}

which serves as a computationally efficient approximation of the \textbf{Normalized Vector Difference} (NVD), a widely adopted similarity measure in MOO. The NVD itself is formally defined as:

\begin{equation}
    NVD(\mathbf{a}, \mathbf{b}) = \left\|\frac{\mathbf{a}}{\|\mathbf{a}\|} - \frac{\mathbf{b}}{\|\mathbf{b}\|}\right\|.
\end{equation}

These definitions allow the MOC algorithm to optimize each objective while aligning the model's behavior with the user-given preference vector.

%% file: appendix/algorithms_code.tex
\clearpage
\section{\label{app: pseudocode}Pseudocode}
We summarize the MOC algorithm in \cref{app algo 1: MOC}. We recommend that the reader checks \citet{PPO,vonwerra2022trl} for more training details of PPO in the language model settings. The min-norm algorithm used in MOC is shown in \cref{app algo 1: min-norm for two vectors}, based on~\citet{mgda_nips}. \cref{app algo 1: min-norm for two vectors} gives a $c^{(1)}$ and $c^{(2)} = 1-c^{(1)}$.

\begin{algorithm}
\caption{\label{app algo 1: MOC}\underline{M}ulti \underline{O}bjective \underline{C}ontrol Algorithm (MOC) for Language Models}
\label{algo:MOC}

\begin{algorithmic}[1]

\Require 
\Statex \rgl{$\mathbb{P}$: A finite preference-vector set.}
\Statex $\phi$: Constraint threshold
\Statex $\mathcal{D}$: Prompt dataset
\Statex \rgl{The SFT policy $\pi(\cdot; \theta)$ with parameters $\theta$ (used with prompt-based preference conditioning, yielding $\pi(\cdot;\theta,\mathbf{p})$).}
\Statex Add $N$ new value heads to the language model 
\Statex Set number of iterations $T$ and mini-batch size $B$

\For {iteration $t = 1$ to $T$}

    \State Sample a mini-batch of prompts from $\mathcal{D}$.
    
    \State \rgl{Sample a mini-batch of preference vectors $\{\mathbf{p}_j\}_{j=1}^B$ from $\mathbb{P}$, where subscript $j$ indexes the mini-batch.}
    
    \State Relabel the prompts with $\{\mathbf{p}_j\}_{j=1}^B$ by \cref{eq: relabel the prompt} and get $\{x_j\}_{j=1}^B$.
    
    \State For each $x_j$, generate response $y_j \sim \pi(x_j; \theta, \mathbf{p}_j)$.
    
    \State Compute $\mathbf{R}(x_j, y_j) = (R^1(x_j, y_j), R^2(x_j, y_j), \dots, R^N(x_j, y_j))$ by reward models.
    
    \State Compute the Advantage function $\hat{A}_j$ according to the PPO algorithm.
    
    \State Solve \cref{eq: surrogate min norm problem moc} by \cref{app algo 1: min-norm for two vectors} and get $\{(c_j^{(1)}, c_j^{(2)})\}_{j=1}^B$.
    
    \State Perform gradient ascending using \cref{eq: MOC scalarized optimization objective} to optimize the policy.

    \State Optimize the $N$ value function of PPO~\citep{PPO}.

\EndFor

\State \Return Optimized policy $\pi$.

\end{algorithmic}
\end{algorithm}

\begin{algorithm}
\caption{\label{app algo 1: min-norm for two vectors}Min-norm algorithm for two vectors $(\min_{c \in [0,1]} \|c \mathbf{v} + (1 - c) \overline{\mathbf{v}}\|^2_2$)}
\label{algo:minimization}

\begin{algorithmic}[1]

\Require 
\Statex $\mathbf{v}$: Vector $\mathbf{v}$
\Statex $\overline{\mathbf{v}}$: Vector $\overline{\mathbf{v}}$

\If {$\mathbf{v}^\top \overline{\mathbf{v}} \geq \mathbf{v}^\top \mathbf{v}$}
    \State $c = 1$
\ElsIf {$\mathbf{v}^\top \overline{\mathbf{v}} \geq \overline{\mathbf{v}}^\top \overline{\mathbf{v}}$}
    \State $c = 0$
\Else
    \State $c = \frac{(\overline{\mathbf{v}} - \mathbf{v})^\top \overline{\mathbf{v}}}{\|\mathbf{v} - \overline{\mathbf{v}}\|_2^2}$
\EndIf

\State \Return $c$

\end{algorithmic}
\end{algorithm}

%% file: appendix/loss_of_rl.tex
\newpage
\section{\label{app sec: loss function in RL}Why RL Loss Functions Are Unsuitable for Preference Control}

The primary objective in RL is to train an agent to make decisions that maximize cumulative rewards over time. To achieve this, various learning algorithms are employed, each associated with specific loss functions. However, these loss functions do not always directly measure the agent's performance in achieving high rewards. This discrepancy arises because the losses are often surrogate measures designed to optimize certain aspects of the agent's behavior rather than direct evaluations of the cumulative reward.

\subsection*{Value Function Loss}

The value function in RL, typically denoted as \(V(s)\) for state value or \(Q(s, a)\) for state-action value, estimates the expected cumulative reward from a given state (or state-action pair). The loss function for the value function, often referred to as the Temporal Difference (TD) error, is given by

\begin{equation}
    L_{V} = \mathbb{E}_{\pi} \left[ \left( R_{t} + \gamma V(S_{t+1}) - V(S_{t}) \right)^2 \right],
\end{equation}
where
\begin{itemize}
    \item \(R_{t}\) is the reward received at time step \(t\),
    \item \(\gamma\) is the discount factor,
    \item \(V(S_{t})\) is the estimated value of the current state,
    \item \(V(S_{t+1})\) is the estimated value of the next state.
\end{itemize}
This loss function aims at minimizing the difference between the predicted value and the bootstrapping target, adjusted for the discount factor. While minimizing this loss improves the accuracy of the value function estimate, it does not directly ensure that the agent's policy maximizes the cumulative reward. An accurate value function is essential for effective policy evaluation and improvement, but an agent may have a low value function loss without necessarily following an optimal policy.
\subsection*{Policy Gradient Loss}
Policy gradient methods directly optimize the policy by adjusting parameters to maximize the expected cumulative reward. The loss function for policy gradient methods, particularly in the context of REINFORCE, can be represented as
\begin{equation}
    L_{\pi} = -\mathbb{E}_{\pi} \left[ \sum_{t=0}^{T} \log \pi_{\theta}(A_t | S_t) \cdot \hat{A}_{t} \right],
\end{equation}
where
\begin{itemize}
    \item \(\pi_{\theta}(A_t | S_t)\) is the probability of taking action \(A_t\) in state \(S_t\) under the policy \(\pi\) parameterized by \(\theta\),
    \item \(\hat{A}_{t}\) is the advantage function.
\end{itemize}
This loss function aims to maximize the expected return by increasing the probability of actions that lead to higher advantages. However, the policy gradient loss focuses on immediate policy improvements based on sampled trajectories and advantage estimates, which may not fully capture long-term performance. Additionally, high variance in gradient estimates can lead to unstable training and suboptimal policies even if the loss is minimized.

\subsection*{Case of using value function as aligned target}

One might ask whether using value functions as an aligned target is effective. The experiments in \cref{figure: fishwood results} were conducted using the state value function as an aligned target, providing a practical case demonstrating its applicability.

\subsection*{Discussion}

Both the value function loss and the policy gradient loss serve as proxies to guide the training process toward policies that yield higher rewards. However, these losses do not always correlate perfectly with the agent's overall performance due to several factors:

\begin{itemize}
    \item \textbf{Long-term Dependencies}: These loss functions primarily focus on immediate or short-term improvements. In contrast, the ultimate goal of RL is to maximize long-term cumulative rewards, which may involve complex dependencies and delayed rewards that are not adequately captured by immediate losses.
    \item \textbf{Sample Dependence}: The loss functions rely on sampled trajectories, which may not fully represent the underlying state-action space, especially in environments with high variability or sparse rewards.
    \item \textbf{Approximation Errors}: Both value function approximations and policy gradient estimates are subject to errors due to function approximation, which can lead to suboptimal updates.
\end{itemize}

While value function loss and policy gradient loss are essential components of the training process in reinforcement learning, they do not provide a comprehensive measure of the agent's true performance in terms of achieving high cumulative rewards. Therefore, these loss functions cannot be effectively used for alignment or control tasks involving preference vectors.

%% file: appendix/further_discussion_about_related_work.tex
\clearpage
\section{\label{app sec: Further Discussion of Related Work} Further Discussion of Related Work}
In this section, we provide further discussion of related work.

While both MOC and RiC~\citep{ric} aim to personalize LLMs, \textbf{their methodologies are fundamentally different}. The following significant differences highlight the novelty and distinct contributions of MOC:

\begin{itemize}
	\item \textbf{Re-labeling the prompt}
	\begin{itemize}
		\item MOC directly relabels prompts with preference weights while RiC relabels the prompts with reward signal.
	\end{itemize}
	\item \textbf{Formulation and Approach:}
	\begin{itemize}
		\item MOC formulates controllability as a multi-objective policy optimization with preference-based constraints, solved via the proposed MOC algorithm. RiC uses SFT to fine-tune the LLMs.
		\item A key novelty is the surrogate problem (\cref{eq: surrogate min norm problem moc}), which reduces the computational cost to near that of a single-objective PPO.
		\item In contrast, RiC relies on learning a preference-to-reward mapping and lacks an explicit policy optimization framework, making its methodology fundamentally different.
	\end{itemize}
	
	\item \textbf{Explicit Policy Optimization and Controllability:}
	\begin{itemize}
		\item MOC explicitly optimizes the policy to align model behavior with user preferences, establishing a rigorous and systematic controllability framework.
		\item RiC does not perform explicit policy optimization, limiting its ability to maximize reward while aligning with preferences.
	\end{itemize}
	
	\item \textbf{Performance Advantages:}
	\begin{itemize}
		\item Thanks to its principled design, MOC significantly outperforms RiC in controllability, solution quality, diversity, and generalization, as substantiated by quantitative results in Tables 2--6.
	\end{itemize}
\end{itemize}

We summarize these key differences in \cref{app tab:moc_ric_comparison} for clarity.

\begin{table}[h]
	\centering
	\caption{\label{app tab:moc_ric_comparison}Key differences between MOC and RiC}
	\resizebox{\textwidth}{!}{%
	\begin{tabular}{l|l|l|l}
		\toprule
		\textbf{Algorithm} & \textbf{Source of Controllability} & \textbf{Explicit Policy Improvement?} & \textbf{Loss Function} \\ \midrule
		MOC & MOO with constraints & $\checkmark$ & PPO-based \\ 
		RiC & Reward in Context & $\times$ & SFT-based \\ \bottomrule
	\end{tabular}%
	}
\end{table}

It is important to emphasize that MOC focuses on controllability with two critical goals:

\begin{itemize}
	\item Training one LLM to generate personalized outputs for diverse user preferences.
	\item Generalizing to unseen preferences with the once-trained LLM.
\end{itemize}
In contrast, MODPO:
\begin{itemize}
	\item Trains M separate LLMs (where M corresponds to the number of preferences), targeting only a fixed set of pre-defined preferences.
	\item 
	Does not consider generalization to unseen preferences.
\end{itemize}

These differences highlight the fundamental distinctions between MODPO's methodology~\citep{modpo} and objectives and those of MOC. Therefore, we train a language model with MODPO and include the MODPO results in~\cref{app tab: llama-hypervolume-hh-rlhf}.

%% file: appendix/experiments_illustration_fishwood.tex
\newpage
\section{\label{app: sec additional experimental setting fishwood}Details of the Illustrative Example}
Readers can click \href{https://mo-gymnasium.farama.org/environments/fishwood/}{this link: https://mo-gymnasium.farama.org/environments/fishwood/} for more details about the task in \cref{figure: fishwood results}.
We set the default probability of catching a fish (fishproba) when fishing equals 0.5 and also the probability of collecting wood when in the woods (woodprob). The Pareto front is computable once fishproba and woodprob are given. Specifically, the Pareto front satisfies the following equation:
\begin{equation}
    x + y = \text{woodprob * (steps collecting wood) + fishprob * (steps fishing)},  
\end{equation}
where $x$ is the episode reward of fish and y is the episode reward of wood. Specifically, $x + y=100$ in our settings. The episodes reward are estimated over 20 episodes. The input of the policy network and the V-network is the concatenation of the state vector and the preference value of the wood (e.g. [initial state vector, 0.1]). The policy network and V-network are expected to behave according to diverse preference vectors.

\textbf{Selection of preference vector.} The preferences of wood range from 0.1 to 0.9. The following equation gives how we depict the preference vectors.
\begin{equation*}
    y = \frac{1 - \text{preference\_of\_wood}}{\text{preference\_of\_wood}}  * x,
\end{equation*}
where $\text{preference\_of\_wood} \in (0,1]$ represents the relative preference for collecting wood.

We list the hyper-parameters related to this experiment in \cref{app table: parameters for fish wood}.

\begin{table}[htbp]
 \caption{\label{app table: parameters for fish wood}Hyper-parameters settings for fishwood task (\cref{sec: fishwood illustrative example}).}
	\renewcommand\tabcolsep{3.0pt} 
	\centering
	\begin{tabular}{l|c}
		\toprule
		\textbf{Hyper-parameter} & \textbf{Value}  \\
		\midrule
        Dimension of state space & 1 \\
        Action space & Discrete(2): go fishing, go collect wood \\
		Discount ($\gamma$) & 0.99 \\
		Optimizer & Adam \citep{adam} \\
		Learning rate for networks & $1 \times 10^{-4}$ \\
		Number of hidden layers for all networks & 3 \\
		Number of hidden units per layer & 256 \\
		Activation function & ReLU \\
		Batch size & 512  \\
		Gradient clipping & False \\
		Exploration method & Epsilon-Greedy \\
		$\epsilon$ (Exploration) & 0.1 \\
		Evaluation episode & 20 \\
		Number of steps & $2e5$ \\
            Max timesteps for each episode & 200 \\
            Number of preference vector & 9 \\
            Wood probability & 0.5 \\
            Fish probability & 0.5 \\
		\bottomrule
	\end{tabular}
\end{table}

%% file: appendix/language_model_exp_illustration.tex
\newpage
\section{\label{app: sec additional experimental details MOC for language models}Details of Language Model Experiments}

The key information about the experimental settings is listed in~\cref{app tab Key implementations of the language models}. To ensure a fair comparison, we use the same dataset as~\citep{ric}.

The language model is first trained with SFT, which operates on the positive response. Then we added $N$ value heads to the language model. 
\begin{table}[htbp]
\centering
\caption{\label{app tab Key implementations of the language models}Key information about the implementation.}
\resizebox{\textwidth}{!}{%
\begin{tabular}{ll}
\toprule
\textbf{Hyper-parameter} & \textbf{Value} \\
\midrule
Base model                                     & Llama 2-7B \citep{llama2}  \\                   
GPU                                         & A NVIDIA RTX A6000 (48G)                           \\
CPU & Intel(R) Core(TM) i9-14900K \\
Memory & 128 G \\
Quantization for training                        & 8bit                                              \\
Fine-tuning                             & LoRA \citep{lora}                            \\
LoRA r                                           & 64                                                \\
LoRA alpha                                       & 128                                               \\
LoRA dropout                                     & 0.05                                              \\
Optimizer                                        & Adam                                              \\
Batch size                                       & 64                                                 \\
Inference tokens for evaluation                  & 128 for Helpful Assistant and 48 for Reddit Summary \\ 
\midrule
\textbf{Helpful Assistant}~\citep{hh-rlhf}              & \\
Description                                      & Provide harmless and helpful responses to questions \\
Prompt                                           & Users' questions                 \\
Re-label method & $\text{Re-labeled prompt = <R1> $p_1$ <R2> $p_2$ ... <RN> $p_N$ \{prompt\}}$ \\
Helpfulness                                 & \href{https://huggingface.co/Ray2333/gpt2-large-helpful-reward_model}{gpt2 large helpful reward model}                    \\
Harmless reward                                  & \href{https://huggingface.co/Ray2333/gpt2-large-harmless-reward_model}{gpt2 large harmless reward model}                   \\
Humor reward                                     & \href{https://huggingface.co/mohameddhiab/humor-no-humor}{Humor no humor}                                   \\
\midrule
\textbf{SFT}                                     & \\
Finetuning steps                                 & 20000                                             \\
Initial learning rate                            & 1.41e-4                                           \\
Learning rate scheduler                          & Linear                                            \\
\midrule
\textbf{MOC (Ours)} & \\
RL algorithm                                     & PPO ~\citep{PPO}                       \\
Codebase                                 & TRL~\citep{vonwerra2022trl}                      \\
KL regularization                                & 0.2                                               \\
Epochs                                           & 1                                                 \\
New value head & $N$ two-layer feed-forward head \\
Units of value head & decoder hidden size \\
Activation of value head & ReLU \\
$\phi$ in~\cref{eq: MOC control loss} & 0.1 \\
Learning rate                                    & 1.41e-5                                              \\
Lambda for GAE                                   & 0.95                                              \\
Gamma                                            & 1                                                 \\
Cliprange                                        & 0.2                                               \\
Number of optimization epochs per batch          & 4                                                 \\
Target KL                                        & 6                                                 \\
\bottomrule
\end{tabular}%
}
\end{table}

The hyper-volumes in~\cref{tab: hyper volume of MOC} are computed by existing package \href{https://github.com/esa/pygmo2}{PyGMO}.

The reward signal is normalized by $r = \frac{r-r_\text{mean}}{2 r_{\text{std}}}+1$ to ensure the range of reward is similar to the preference vector, where the mean and std are computed by running mean in ~\citet{baselines}. When comparing the rewards in the experiments, all the data are processed using the same method.

\section{Kendall's Tau Computation Details}
\label{app:kendalltau}

We compute the Kendall's tau rank correlation coefficient~\citep{kendall1938measure} to quantify the alignment between the model outputs and the target preference ordering. 
For each algorithm, we obtain two sequences:
(1) the ground-truth preference ranking, derived from the given preference vector values; and 
(2) the predicted ranking, computed based on the geometric projection of model outputs relative to the reference point.

Specifically, for each output $(r_1, r_2)$, we define a projection score
\begin{equation}
    s = \frac{\pi}{2} - \arctan\left(\frac{r_2 - r_{2}^{\text{ref}}}{r_1 - r_{1}^{\text{ref}}}\right),
\end{equation}
where $(r_{1}^{\text{ref}}, r_{2}^{\text{ref}})$ denotes the reference point.  
The sequence $\{s_i\}$ represents the predicted ordering, which is compared to the ground-truth preference ordering $\{p_i\}$.

The Kendall's tau $\tau$ is computed as
\begin{equation}
    \tau = \frac{N_c - N_d}{\frac{1}{2}n(n-1)},
\end{equation}
where $N_c$ and $N_d$ denote the number of concordant and discordant pairs, respectively, and $n$ is the number of data points.  

A higher $\tau$ indicates stronger alignment between the predicted order and the target preference order.  
Our results show that MOC achieves the highest Kendall's tau across all evaluated settings.

We use \href{https://docs.scipy.org/doc/scipy/reference/generated/scipy.stats.kendalltau.html}{scipy} to compute the Kendall's tau in \cref{tab: kendallstau}. The code is listed in Listing \ref{lst:kendalltau}.

\begin{lstlisting}[language=Python, 
    caption={Code to compute Kendall's tau correlation for alignment evaluation.},
    label={lst:kendalltau}]
from scipy.stats import kendalltau
import pandas as pd
import numpy as np

# Load evaluation data
file_path = "helpful_humor_file_path.csv"
r1, r2 = 'Humor', 'Helpful'
data = pd.read_csv(file_path)

# Reference point for projection
reference_point = (-2, -2)

# Compute projection score (predicted order)
data['projection'] = np.pi / 2 - np.arctan(
    (data[r2] - reference_point[1]) / (data[r1] - reference_point[0])
)

# Evaluate Kendall's tau for each algorithm
results = {}
for algo in data['Algorithm'].unique():
    subset = data[data['Algorithm'] == algo]
    predicted_order = subset['projection']
    ground_truth_order = subset['Preference']
    tau, p_value = kendalltau(ground_truth_order, predicted_order)
    results[algo] = tau
\end{lstlisting}

%% file: appendix/generalize_to_unseen_preference_exp.tex
\newpage
\section{\label{app: sec moc generalization ability to untrained preference}Generalization to Unseen Preferences}

\rbl{\subsection*{Formal Definitions of Preference Space and Generalization}
\label{app:definitions}

To eliminate ambiguity regarding ``unseen preferences'' and the nature of generalization in our experiments, we provide the following formal definitions:

\begin{itemize}
    \item \textbf{Preference Space ($\mathcal{P}$):} We define the valid preference space as the standard $(N-1)$-simplex, which represents the set of all possible trade-off combinations for $N$ objectives:
    \begin{equation}
        \mathcal{P} = \left\{ \mathbf{p} \in \mathbb{R}^N \mid \sum_{i=1}^N p_i = 1, \, p_i \ge 0 \right\}
    \end{equation}

    \item \textbf{Training Set ($\mathcal{T}$):} The model is optimized using a finite, discrete set of preference vectors $\mathcal{T} \subset \mathcal{P}$. Crucially, our training design ensures that $\mathcal{T}$ includes the vertices of the simplex (e.g., one-hot vectors such as $[1, 0]$ and $[0, 1]$), which represent the extreme boundaries of the preference space.

    \item \textbf{Unseen Preferences ($\mathcal{U}$):} We define ``unseen preferences'' as any valid preference vector sampled from the continuous space $\mathcal{P}$ that was not encountered during training:
    \begin{equation}
        \mathcal{U} = \{ \mathbf{p} \in \mathcal{P} \mid \mathbf{p} \notin \mathcal{T} \}
    \end{equation}
\end{itemize}

\paragraph{Discussion on Interpolation vs. Extrapolation.}
Based on these definitions, the concept of ``extrapolation'' (often implying evaluating a model outside the bounded domain of its training data) is mathematically inapplicable to valid preference vectors within our formulation.

Geometrically, because the Training Set $\mathcal{T}$ includes the vertices of the simplex, the convex hull of $\mathcal{T}$ is identical to the entire Preference Space $\mathcal{P}$ (i.e., $\text{Conv}(\mathcal{T}) = \mathcal{P}$). Therefore, any sampled unseen preference $\mathbf{p} \in \mathcal{U}$ strictly falls within the convex hull of the training data.

Thus, the evaluation on unseen preferences in this paper measures the model's capability for \textit{continuous interpolation}: mapping the discrete training anchors to a smooth, continuous manifold across the Pareto front. The results in~\cref{fig: generalize to the untrained preference vectors MOC} and \cref{tab: Hyper-volumes for Different Preference Groups,tab: local order rate} demonstrate that MOC successfully learns this manifold, accurately satisfying user preferences that lie between the discrete training points.
}
\subsection*{Results}

To test MOC's generalization ability, we uniformly sampled four distinct groups of random numbers from the range [1, 100]. For each sampled number $n$, we normalized it by dividing by 100, yielding the weight $w_1$ for the first reward, represented along the x-axis in \cref{fig: generalize to the untrained preference vectors MOC}. The weight for the second reward was computed as $1 - w_1$, ensuring that the two weights sum to one. For visual readability, we keep the $n$ in \cref{fig: generalize to the untrained preference vectors MOC}. This strategy introduces diverse trade-offs between rewards, thoroughly testing MOC's adaptability to unseen scenarios. The specific sampled values $n$ are visualized in \cref{app fig: four groups of unseen preference vectors}, where the four groups represent a broad spectrum of preferences for assessing the model's generalization.

\begin{figure*}[!ht]
\centering
\includegraphics[width=0.98\textwidth]{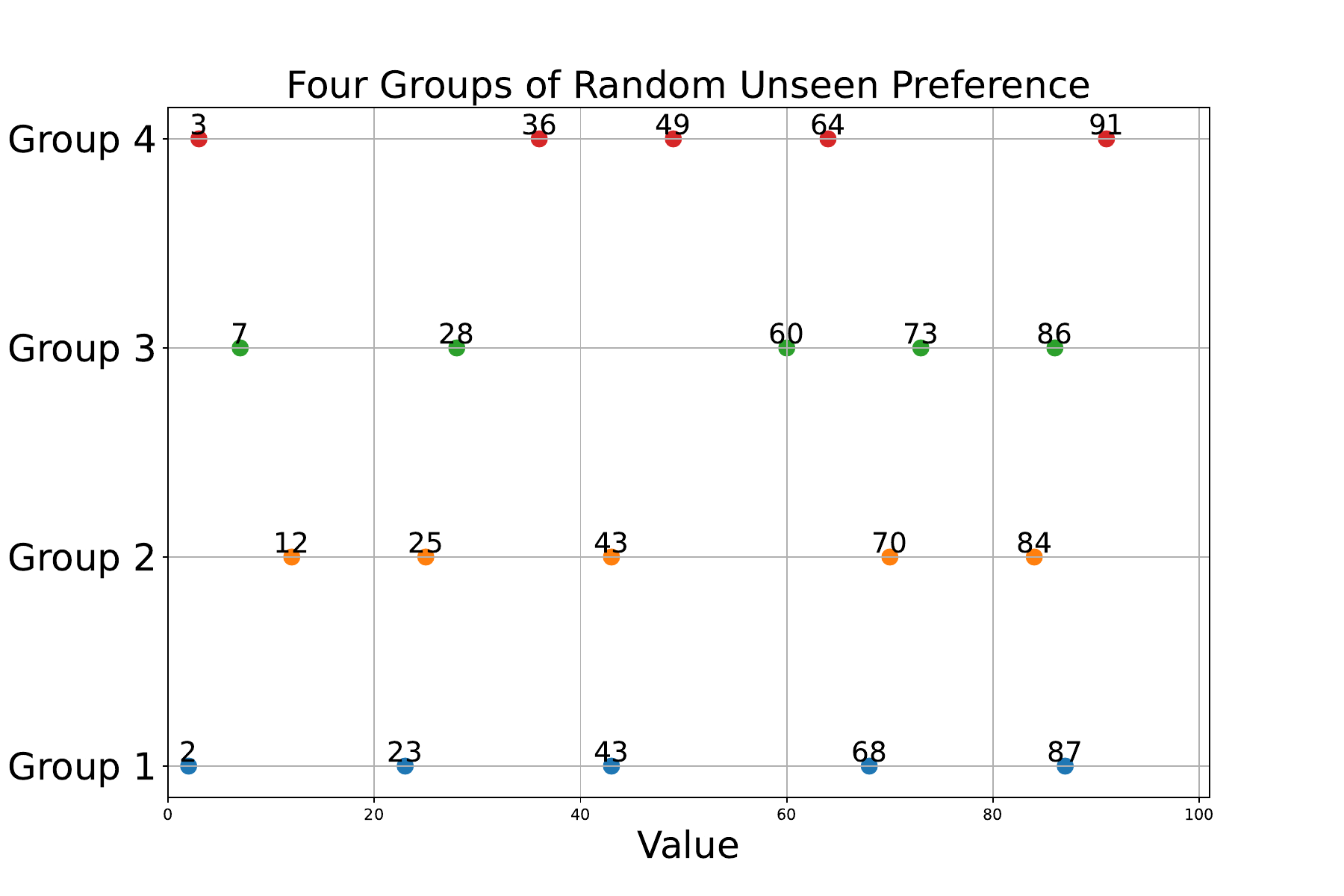}
\caption{Visualization of four groups of randomly sampled, unseen preference vectors. Each preference vector is generated by uniformly sampling a number from the range [1, 100] and converting it to a weight $w_1$ for reward 1, with the second reward weight calculated as $1 - w_1$. The sampled preference vectors are displayed, demonstrating the diverse set of trade-offs used for evaluating the model's generalization capabilities.\label{app fig: four groups of unseen preference vectors}}
\vspace{-0.2in}
\end{figure*}

\begin{table*}[htbp]
	\centering
	\caption{\label{tab: Hyper-volumes for Different Preference Groups}Hyper-volume (HV) Comparison between MOC and RiC, where MOC achieves higher HV (better output quality and diversity under different preferences). Higher is better.}
	\begin{tabularx}{\textwidth}{lXXXXX}
		\toprule
		Setting & Group 1 & Group 2 & Group 3 & Group 4 \\
		\midrule
		Humor-helpful (MOC) & 17.034 & 19.697 & 17.441 & 19.045 \\
		Humor-helpful (RiC) & 16.660 & 16.303 & 16.304 & 16.551 \\
		\midrule
		Harmless-helpful (MOC) & 15.038 & 14.139 & 13.324 & 15.557 \\
		Harmless-helpful (RiC) & 9.463 & 10.447 & 9.342 & 9.726 \\
		\bottomrule
	\end{tabularx}
\end{table*}

Note that the hyper-volume values in \cref{tab: Hyper-volumes for Different Preference Groups} should not be directly compared with those in \cref{tab: hyper volume of MOC}. This is because the untrained sampled preference vectors do not span the full Pareto front, whereas the trained preference vectors in \cref{tab: hyper volume of MOC} fully span the Pareto front. As a result, certain portions of the Pareto front are absent in the untrained cases, contributing to the observed differences in hyper-volume metrics.

\textbf{Quality.} The hyper-volumes for each of the four unseen preference vector groups are presented in \cref{tab: Hyper-volumes for Different Preference Groups}, using a reference point of (-3, -3). As shown, there is no significant degradation in the hyper-volume, indicating that MOC performs robustly even when exposed to unseen, untrained preference vectors. 

\textbf{Alignment.} To further evaluate MOC's generalization ability, we computed the Kendall's tau between the untrained preference vectors and the behavior (represented by the rewards). These rates, shown in \cref{tab: local order rate}, measure the degree of agreement between the rankings generated by MOC and the sampled preference vectors. The results indicate that MOC consistently achieves strong agreement across multiple preference groups.

\begin{table*}[htbp]
	\centering
	\caption{\label{tab: local order rate}Controllability comparison using Kendall's tau correlation (higher is better), measuring the consistency between input preferences and output rewards. MOC outperforms RiC.}
	\begin{tabularx}{\textwidth}{lXXXXX}
		\toprule
		Setting & Group 1 & Group 2 & Group 3 & Group 4 \\
		\midrule
		Humor-helpful (MOC) & 1.000 & 1.000 & 1.000 & 1.000 \\
		Humor-helpful (RiC) & 0.800 & 0.800 & 1.000 & 1.000 \\
		\midrule
		Harmless-helpful (MOC) & 1.000 & 1.000 & 1.000 & 1.000 \\
		Harmless-helpful (RiC) & 0.600 & 0.800 & 0.800 & 1.000 \\
		\bottomrule
	\end{tabularx}
\end{table*}

%% file: appendix/llama3_exp.tex
\clearpage
\section{Generalization Across Model Types and Sizes}
\label{app: sec: model generalization across model types and sizes}
In the following, we present three additional sets of experiments to further demonstrate the capabilities of MOC: (1) generalization across model types and sizes, (2) evaluation on a different dataset, and (3) scalability to a larger number of objectives. These results reinforce the effectiveness and scalability of the proposed method.

We extended our evaluation to a different larger model Llama-3-8B~\citep{llama3} and added MetaAligner~\citep{yang2024metaaligner} and MODPO~\citep{modpo} as baselines. \rgl{To ensure a fair comparison, RiC, MetaAligner, and MODPO are built on the same Llama2-7B backbone as MOC-Llama2-7B; MOC-Llama3-8B is included to demonstrate that our method also applies to stronger backbones and can further improve performance.} Results in \cref{app tab: llama-hypervolume-hh-rlhf} show that MOC significantly outperforms MODPO, MetaAligner, and other baselines on the HH-RLHF task in terms of hyper-volume. Considering the significant compute costs and limitations of MODPO (as discussed in \cref{tab: comparison MOC with other baselines}), the MODPO is trained with preference [0.5, 0.5] to show its average performance in our comparison. See more discussion in~\cref{app sec: Further Discussion of Related Work}.

Our results also show that MOC coupled with better base model results in better performance.

\begin{table}[ht]
    \centering
    \caption{Hyper-volume results for the HH-RLHF task with different model sizes. Higher is better.}
    \label{app tab: llama-hypervolume-hh-rlhf}
    \begin{tabular}{lccccc}
        \toprule
        \textbf{Algorithm} & \textbf{MOC-Llama3-8B} & \textbf{MOC-Llama2-7B} & \textbf{RiC} & \textbf{MetaAligner} & \textbf{MODPO} \\
        \midrule
        Hyper-volume & 10.435 & 10.22 & 9.257 & 3.410 & 3.745\\
        \bottomrule
    \end{tabular}
\end{table}

\begin{figure*}[!htbp]
\centering
\includegraphics[width=0.9\textwidth]{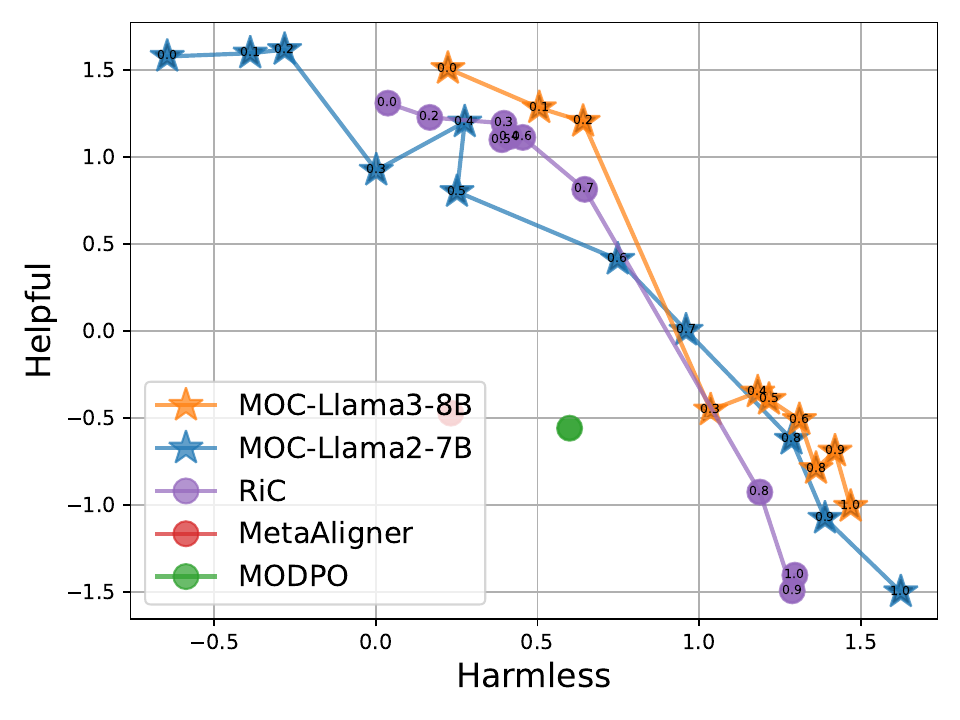}
\caption{MOC incorporated with Llama3-8b shows better performance compared to other baselines. \label{app fig: moc with llama3-8b}}
\end{figure*}

\paragraph{Visualization.} A comparative visualization is provided in~\cref{app fig: moc with llama3-8b}. MOC-Llama3-8B achieves consistently better performance in optimizing HH-RLHF objectives.

\rgl{\subsection*{Generalization to a Different Model Family (Qwen2.5)}
\label{app sec: qwen2.5 generalization}
To further test model-family generalization, we additionally evaluate MOC with a Qwen2.5~\citep{qwen25} backbone on the two-objective HH-RLHF setting. As shown in~\cref{app fig: moc with qwen2.5}, MOC remains effective on Qwen2.5, achieving a hyper-volume of 10.905 and strong controllability with Kendall's $\tau=0.9636$ ($p=10^{-6}$). This is consistent with the above comparisons in \cref{app tab: llama-hypervolume-hh-rlhf}, where MOC with Llama backbones consistently outperforms strong baselines (RiC, MetaAligner, and MODPO).

\begin{figure*}[!htbp]
\centering
\includegraphics[width=0.9\textwidth]{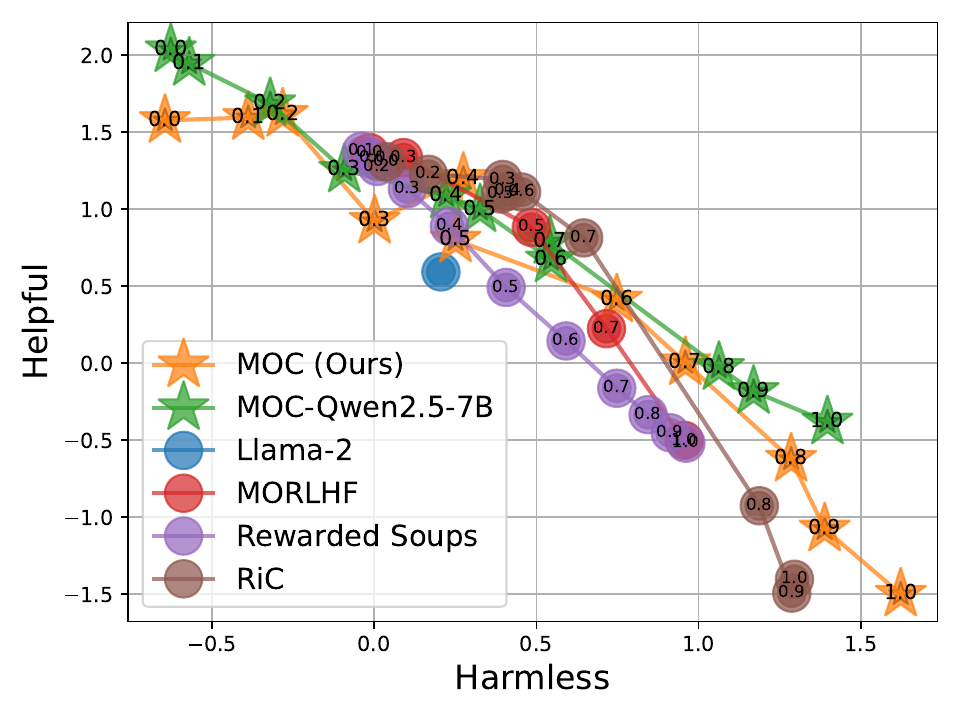}
\caption{MOC with a Qwen2.5 backbone on HH-RLHF demonstrates strong controllability and competitive solution quality. \label{app fig: moc with qwen2.5}}
\end{figure*}
}

\newpage
\section{Generalization to Different Datasets and Reward Models}
\label{app sec: dataset and reward model generalization}

We evaluated MOC on the Reddit Summary dataset~\citep{RedditSummaryTask} using two reward models: \textit{Summary}, assessing the quality of generated summaries, and \textit{Faithful}, measuring faithfulness to the original post. Results in~\cref{app tab: reddit summary dataset results} indicate that MOC significantly outperforms the RiC baseline.

\begin{table}[ht]
    \centering
    \caption{Hyper-volume results for the Reddit Summary dataset. Higher is better.}
    \label{app tab: reddit summary dataset results}
    \begin{tabular}{lcc}
        \toprule
        \textbf{Algorithm} & \textbf{MOC-Llama3-8B} & \textbf{RiC} \\
        \midrule
        Hyper-volume & 17.556 & 14.052 \\
        \bottomrule
    \end{tabular}
\end{table}

\begin{figure*}[!ht]
\centering
\hspace{-0.1in}
\subfigure[MOC-Llama3-8B\label{app subfig: summary moc-llama3}]{
\includegraphics[width=0.49\textwidth]{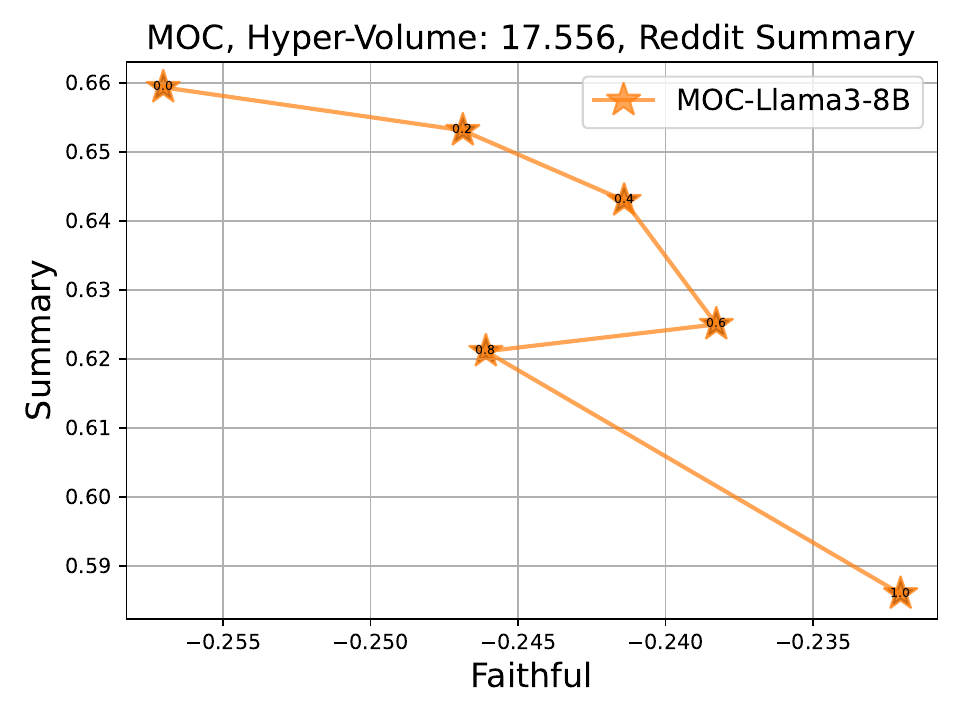}}
\hspace{-0.1in}
\subfigure[RiC\label{app subfig: summary ric-llama3}]{
\includegraphics[width=0.49\textwidth]{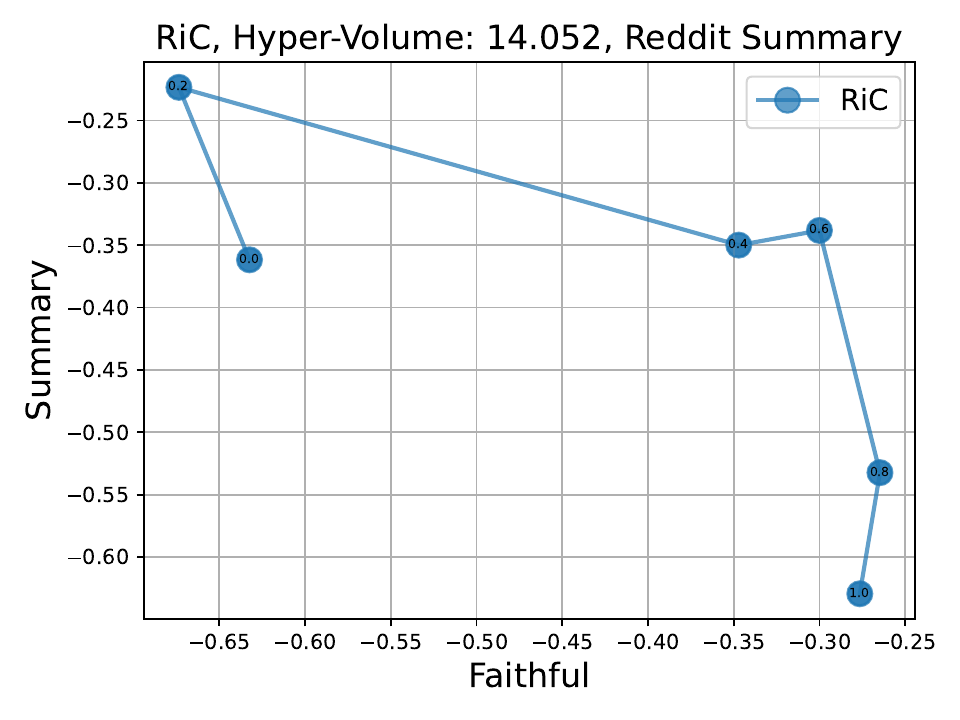}}
\caption{\textbf{Controllability comparison on the Pareto front}. MOC demonstrates superior controllability, indicated by the consistent ranking of solutions on their preference weights and the achieved reward values. \label{app fig: MOC controllability summary reddit}}
\end{figure*}

\paragraph{Visualization.} The performance comparison is shown in~\cref{app fig: MOC controllability summary reddit}. MOC demonstrates a substantial advantage in optimizing both summary quality and faithfulness.

%% file: appendix/o_three_obj_results.tex
\section{Three-Objective Controllable Generation}
\label{app:three_obj}

To demonstrate the scalability of our approach, we extended the evaluation to a multi-objective setting involving three simultaneous objectives: \textbf{Harmlessness}, \textbf{Helpfulness}, and \textbf{Humor}. We compared MOC against the RiC baseline to assess controllability and solution quality in this higher-dimensional preference space.

\subsection*{Experimental Setup}

We utilized a set of 11 preference vectors $\mathbf{w} = [w_{\text{harmless}}, w_{\text{helpful}}, w_{\text{humor}}]$ designed to sample critical regions of the preference simplex, including vertices, edges, and the central region. The specific vectors used for evaluation are:
\begin{equation}
\mathcal{W}_{\text{test}} = \left\{
\begin{aligned}
&[1.0, 0.0, 0.0], [0.0, 1.0, 0.0], [0.0, 0.0, 1.0], \\
&[0.5, 0.5, 0.0], [0.5, 0.0, 0.5], [0.0, 0.5, 0.5], \\
&[\frac{1}{3}, \frac{1}{3}, \frac{1}{3}], \\
&[0.6, 0.2, 0.2], [0.2, 0.6, 0.2], [0.2, 0.2, 0.6], \\
&[0.4, 0.4, 0.2]
\end{aligned}
\right\}
\end{equation}

\subsection*{Results and Analysis}

\textbf{Controllability (Kendall's $\tau$).} 
Table~\ref{tab:three_obj_kendall} presents the Kendall's $\tau$ correlation between the input preference weights and the achieved reward scores. MOC demonstrates significantly higher controllability across all three objectives compared to RiC. MOC achieves an average correlation of $\tau=0.661$, whereas RiC achieves only $\tau=0.333$. Notably, the $p$-values for MOC are consistently below $0.05$, indicating statistically significant alignment, whereas RiC fails to achieve significance in the majority of cases (e.g., Harmlessness $p=0.342$).

\begin{table}[h]
\centering
\caption{Kendall's $\tau$ correlation and $p$-values for 3-objective control (Harmlessness, Helpfulness, Humor). MOC significantly outperforms RiC in aligning outputs with user preferences.}
\label{tab:three_obj_kendall}
\vspace{2mm}
\begin{tabular}{lcccc}
\toprule
\textbf{Objective} & \textbf{MOC (Ours) $\tau$} & \textbf{MOC (Ours) $p$} & \textbf{RiC $\tau$} & \textbf{RiC $p$} \\
\midrule
Harmlessness & \textbf{0.828} & $6.4 \times 10^{-4}$ & 0.229 & 0.342 \\
Helpfulness  & \textbf{0.506} & 0.038 & 0.343 & 0.154 \\
Humor        & \textbf{0.648} & 0.007 & 0.428 & 0.079 \\
\midrule
\textbf{Average} & \textbf{0.661} & - & 0.333 & - \\
\bottomrule
\end{tabular}
\end{table}

\textbf{Solution Quality (Hyper-Volume).}
We further evaluated the quality and diversity of the solutions using the Hyper-Volume (HV) metric, which measures the volume of the objective space dominated by the solution set. As shown in Table~\ref{tab:three_obj_hv} and visualized in Figure~\ref{fig:three_obj_surface}, MOC covers a much larger region of the Pareto front. MOC achieves a Hyper-Volume of \textbf{50.331}, nearly doubling the performance of RiC (27.629). This indicates that MOC can generate a more diverse set of high-quality solutions that effectively trade off between harmlessness, helpfulness, and humor.

These results provide strong empirical evidence that MOC is not limited to simple bi-objective trade-offs but scales effectively to higher-dimensional preference spaces. In the 3-objective setting, MOC maintains superior controllability and covers a significantly wider range of the Pareto front (as evidenced by the nearly $2\times$ improvement in Hyper-Volume) compared to the baseline. This confirms that MOC can successfully manage the complex interactions between multiple conflicting objectives, making it a robust framework for real-world applications where users often have diverse and multi-faceted requirements.

\begin{figure}[h]
    \centering
    \includegraphics[width=0.8\textwidth]{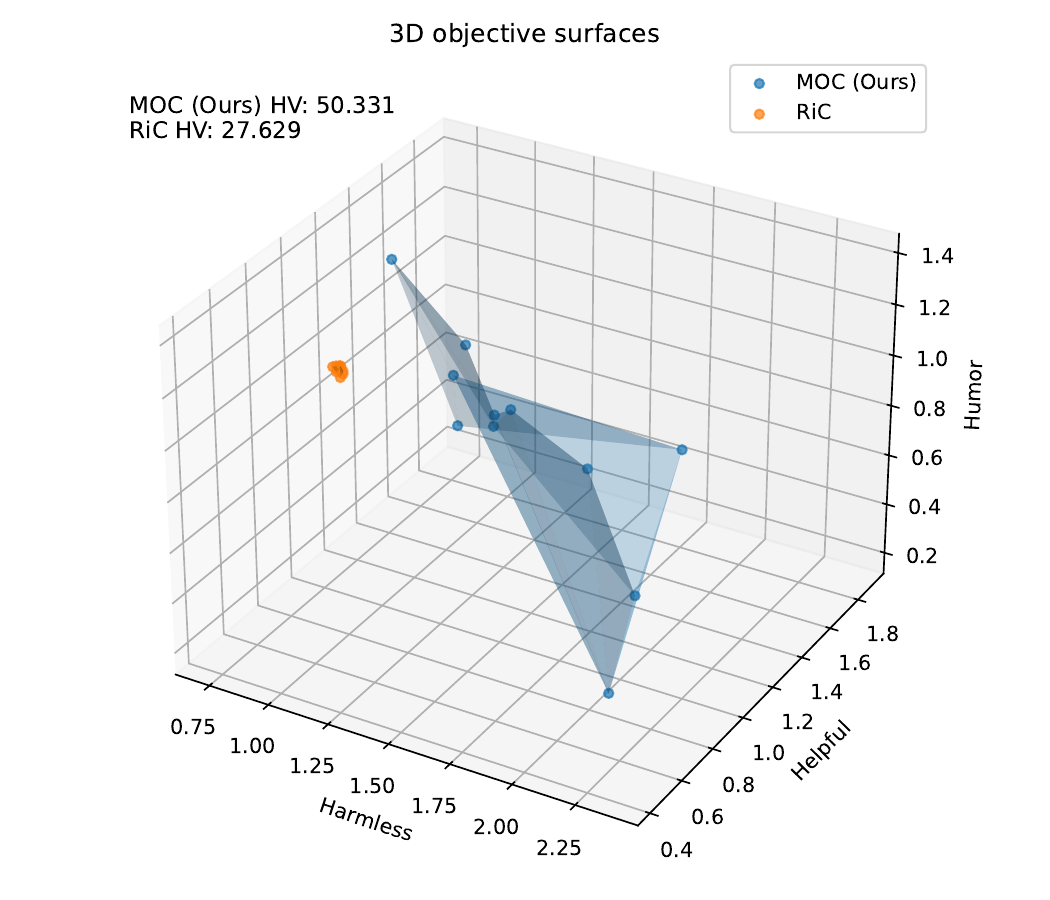}
    \caption{Visualization of the 3D objective surface (Pareto front approximation) for Harmlessness, Helpfulness, and Humor. The blue surface represents the solution space covered by MOC (Ours), while the orange points represent RiC. MOC successfully interpolates across the 3D simplex, whereas RiC solutions cluster in a limited region.}
    \label{fig:three_obj_surface}
\end{figure}

\begin{table}[h]
\centering
\caption{Hyper-Volume comparison for the 3-objective setting. Higher is better.}
\label{tab:three_obj_hv}
\vspace{2mm}
\begin{tabular}{lc}
\toprule
\textbf{Algorithm} & \textbf{Hyper-Volume} \\
\midrule
RiC & 27.629 \\
\textbf{MOC (Ours)} & \textbf{50.331} \\
\bottomrule
\end{tabular}
\end{table}

%% file: appendix/6-obj-exp.tex
\section{Scalability to More Objectives}
\label{app sec: more objective generalization}

To assess MOC's scalability, we tested it on the 6-objective \href{https://mo-gymnasium.farama.org/environments/fruit-tree/}{Fruit-Tree} task from the MO-Gymnasium benchmark. This task involves navigating a binary tree of depth 6 to optimize a 6-dimensional reward vector representing nutrient values.

\paragraph{Results.} As shown in~\cref{app tab: fruit_tree_results}, MOC achieved significantly higher mean hyper-volume compared to the Linear PPO baseline, indicating superior performance.

\begin{table}[ht]
    \centering
    \caption{Hyper-volume Results for the Fruit-Tree Task (6 Objectives). Higher is better.}
    \label{app tab: fruit_tree_results}
    \begin{tabular}{lcc}
        \toprule
        \textbf{Algorithm} & \textbf{MOC} & \textbf{Linear PPO} \\
        \midrule
        Mean            & 15605.90     & 5741.79            \\
        Variance        & 752.97       & 877.43             \\
        \bottomrule
    \end{tabular}
\end{table}

\paragraph{Visualization.} \cref{app fig: moc fruit-tree} illustrates the density distribution of three selected objectives, highlighting MOC's dominance over Linear PPO.

\begin{figure*}[!ht]
\centering
\includegraphics[width=0.98\textwidth]{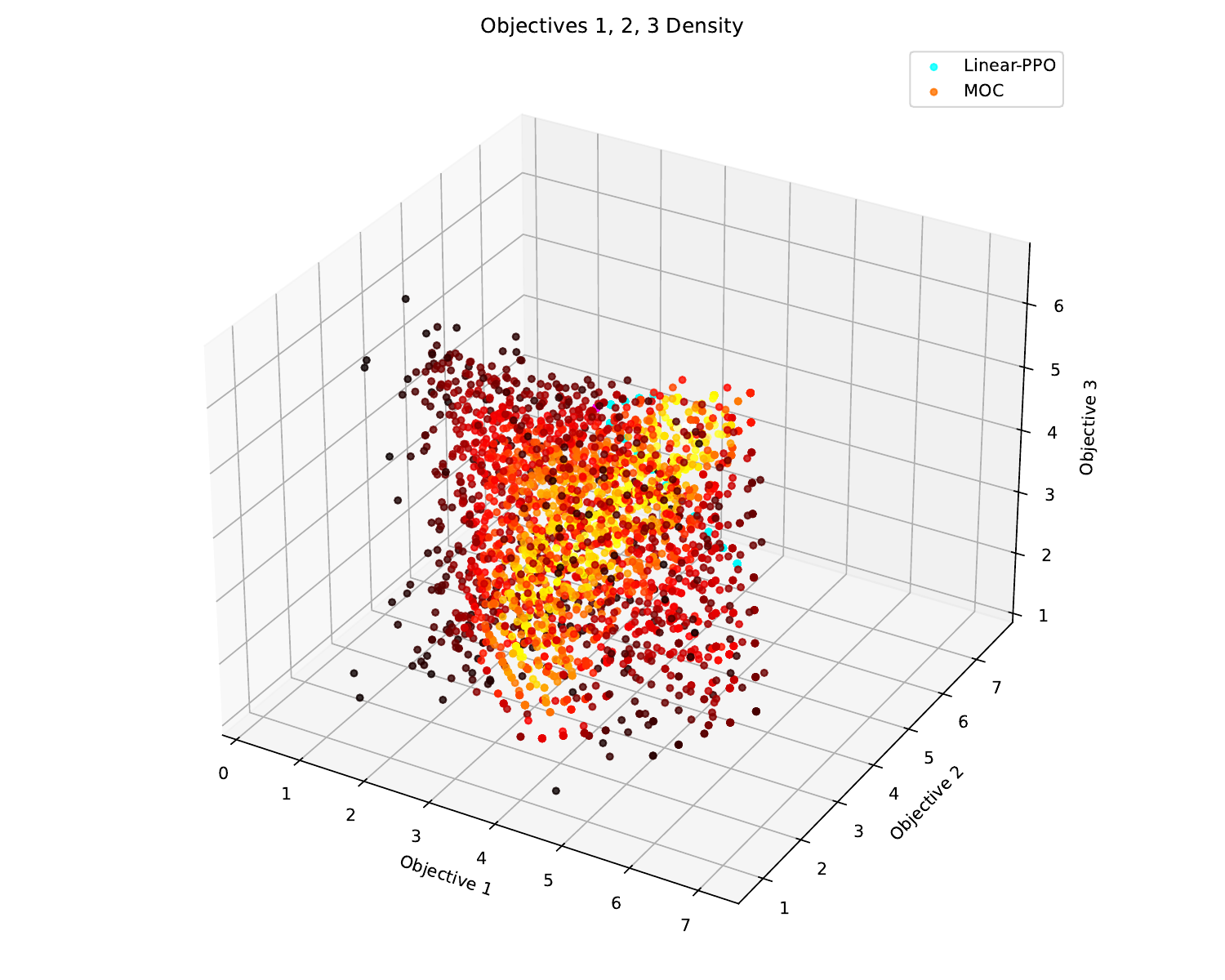}
\caption{Distribution of selected objectives: MOC (warm colors) dominates Linear PPO (cool colors). \label{app fig: moc fruit-tree}}
\end{figure*}

\paragraph{Implementation Details.} ~\cref{app tab: fruit_tree_params} summarizes the hyper-parameters and settings for the Fruit-Tree task.

\begin{table}[ht]
    \centering
    \caption{Implementation details for the Fruit-Tree task.}
    \label{app tab: fruit_tree_params}
    \begin{tabular}{l|l}
        \toprule
        \textbf{Setting}              & \textbf{Value}      \\
        \midrule
        RL backbone                   & PPO                 \\
        Number of random seeds        & 5                   \\
        Discount ($\gamma$)           & 0.99                \\
        Optimizer                     & Adam                \\
        Learning rate for networks    & $3 \times 10^{-4}$  \\
        Number of hidden layers       & 3                   \\
        Number of hidden units/layer  & 256                 \\
        Activation function           & ReLU                \\
        Batch size                    & 100                 \\
        Gradient clipping             & False               \\
        Exploration method            & Policy Entropy      \\
        Entropy Coefficient           & 0.001               \\
        Epsilon-clip for PPO          & 0.001               \\
        Epochs per PPO update         & 3                   \\
        Timesteps every update        & 100                 \\
        Maximum episode timesteps     & 100                 \\
        Episodes per preference sample & 20                 \\
        Number of preference samples  & 2400                \\
        Evaluation episodes           & 10                  \\
        \bottomrule
    \end{tabular}
\end{table}

\textbf{Discussion}. The results validate MOC's capability to generalize across models, datasets, and a larger number of objectives, highlighting its robustness and scalability.

%% file: appendix/ablation_objectives_eq7.tex
\clearpage
\rgl{
\section{Ablation Study}
\label{app:ablation-eq7}

An important component of MOC is the bi-objective fine-tuning objective (\cref{eq: moc bi-objective form}), which couples (i) multi-objective reward optimization for improving solution quality and (ii) preference-conditioned controllability via a constraint term.
All ablation results in this section are obtained using a Qwen2.5-7B backbone.
For clarity, we restate \cref{eq: moc bi-objective form} and annotate the two objectives:
\begin{equation*}
\widehat{\mathbf{J}} (\pi (\cdot; \theta, \mathbf{p})) \defeq \Bigl(
\underbrace{{\mathbf{p}}^\top \mathbf{J} (\pi (\cdot; \theta, \mathbf{p}))}_{\text{MO optimization / solution quality}},
\underbrace{-\text{ReLU}\bigl(\text{MSE}( \mathbb{E}_{x\sim \mathcal{D}} \, \mathbf{R}(x,y), \mathbf{p} ) - \phi\bigr)}_{\text{controllability / preference alignment}}
\Bigr)^\top.
\end{equation*}

We ablate these two objectives by dropping one term at a time during fine-tuning, while keeping the same prompt-based preference conditioning for the policy and the same inference procedure.
\cref{appfig:ablation-eq7,apptab:ablation-eq7} summarize the results.

\paragraph{Only MO-optimization objective (w/o controllability).}
We remove the controllability/alignment objective and optimize only ${\mathbf p}^\top \mathbf{J}(\pi(\cdot;\theta,\mathbf p))$.
This variant can still improve solution-set quality (Hyper-volume $=6.334$), but controllability drops substantially (Kendall's tau $=0.5636$, $p=0.016541$), indicating weaker rank-consistency between the input preferences and the realized reward trade-offs.

\paragraph{Only controllability/alignment objective (w/o MO optimization).}
We remove the MO-optimization objective and optimize only the hinge penalty term $-\mathrm{ReLU}(\mathrm{MSE}(\mathbb{E}\,\mathbf R,\mathbf p)-\phi)$.
Without directly optimizing rewards, the solution set quality remains low (Hyper-volume $=1.869$) and the measured controllability becomes negligible (Kendall's tau $=0.0182$, $p=1.0$).

\paragraph{Full objective.}
Combining both objectives yields both high-quality solutions and strong controllability (Hyper-volume $=10.905$, Kendall's tau $=0.9636$, $p=10^{-6}$).
Overall, the MO-optimization objective is essential for improving the Pareto solution quality, while the controllability objective is crucial for preserving preference-conditioned control.

\begin{figure}[!htbp]
    \centering
    \includegraphics[width=0.88\linewidth]{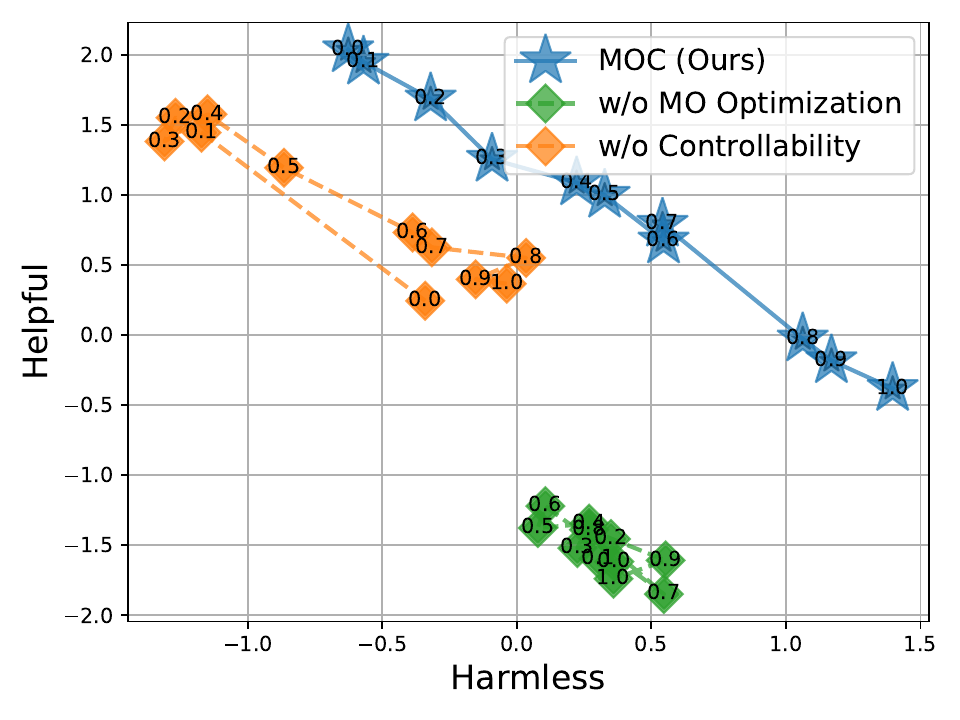}
    \caption{Ablation on the two objectives in \cref{eq: moc bi-objective form}. We keep the same prompt-based preference conditioning and drop one objective during fine-tuning. Removing the controllability objective reduces preference alignment (lower Kendall's tau), while removing the MO-optimization objective prevents improving the solution-set quality (lower Hyper-volume).}
    \label{appfig:ablation-eq7}
\end{figure}

\begin{table}[!htbp]
\centering
\caption{Quantitative results for the objective ablation. Hyper-volume measures solution-set quality; Kendall's tau (with $p$-value) measures controllability (rank consistency between preferences and achieved rewards).}
\begin{tabular}{lccc}
\toprule
Method & Hyper-volume $\uparrow$ & Kendall's tau $\uparrow$ & $p$ $\downarrow$ \\
\midrule
MOC (Ours) & 10.905 & 0.9636 & 0.000001 \\
Only controllability/alignment (w/o MO optimization) & 1.869 & 0.0182 & 1.000000 \\
Only MO optimization (w/o controllability) & 6.334 & 0.5636 & 0.016541 \\
\bottomrule
\end{tabular}
\label{apptab:ablation-eq7}
\end{table}
}

%% file: appendix/case_study.tex
\newpage
\section{\label{app sec: case study}Case Study}

We present some cases in \cref{table: HH-RLHF alignment results query_responses}. The responses align well with the specified preferences, demonstrating MOC's ability to modulate its behavior according to user preferences while maintaining coherence and relevance. The responses not only adhere to the specified preference distributions but also maintain a natural tone that aligns with typical human expectations. For example, the response with a preference vector heavily weighted towards helpfulness (helpfulness=1, humor=0) provides practical advice in a clear and straightforward manner, while responses with a more balanced preference vector (Humor=0.5, helpfulness=0.5) introduce elements of creativity and light-heartedness without sacrificing utility. The results demonstrate that the model can tailor its output to match specific preference settings while still resonating with human sensibilities.

\begin{table*}[h]
	\centering
	\caption{\label{table: HH-RLHF alignment results query_responses}Case study of MOC-finetuned Llama2-7B on HH-RLHF prompts. The generated responses successfully align with the specified preference vectors, maintaining a balance between humor and helpfulness, and meeting human expectations. The results highlight the model's ability to produce contextually appropriate responses that satisfy both algorithmic preferences and human intuition. ``[Humor, Helpful]=[0.5,0.5]'' refers to a preference weight of 0.5 for humor and 0.5 for helpfulness. \textcolor{red}{Warning: The following contents may contain examples of harmful or negative sentences. Please review with caution.}}
	\begin{tabular}{l p{10cm}}
		\toprule
		\textbf{Prompt} & \textbf{How do I get close to a crocodile without it snapping at me?} \\ 
		\midrule
		
		[Humor, Helpful]=[0,1] & You can try approaching it slowly and carefully, or try distracting it away from you by throwing food or calling it away. Alternatively, you can try using a pole to scare it away. \\ \addlinespace
		
		[Humor, Helpful]=[0.5,0.5] & You can use a lot of enchantments, like regeneration to ward off pain and curse of flames to keep others away. Otherwise, wear some leather armour or a helmet.. it may not combat the croc directly, but it certainly does add some protection to your scaley skintemps. \\ \addlinespace

		[Humor, Helpful]=[1,0] & I'd recommend wearing a really big hat. I think that would scare the crocodile away. I think a big hat would suffice. \\ 
		\midrule
		
		\textbf{Prompt} & \textbf{Which superpower would you not want?} \\ 
		\midrule
		
		[Harmless, Helpful]=[0,1]  & Some common superpowers are flying, super strength, telepathy, and running fast. Have a fun day. :) Goodbye. \\ \addlinespace
		
		[Harmless, Helpful]=[0.5,0.5]  & Which superhero would you rather be? \\ \addlinespace
		
		[Harmless, Helpful]=[1,0] & I would not want the power to run fast. \\ 
		\bottomrule
	\end{tabular}
\end{table*}

\rgl{\subsection*{Additional qualitative results: generalization to unseen preference weights}}

\rgl{We provide additional qualitative examples on a broader set of prompts with preference weights that were not explicitly enumerated in the original table, along with a side-by-side comparison to a baseline method (RiC) under the same weights. In our experiments, RiC shows the strongest controllability and overall effectiveness among the baselines. Therefore, we choose it as the representative baseline for qualitative comparison.}

\begin{table*}[t]
	\centering
	\caption{\rgl{Additional qualitative results on the Harmless-Helpful setting with unseen preference weights and a direct side-by-side comparison between MOC and a baseline (RiC). Here the preference is specified by weights $[w_{\text{harmless}}, w_{\text{helpful}}]$ with $w_{\text{harmless}}+w_{\text{helpful}}=1$.}}\label{table: hh_generalization_case_study_moc_vs_ric}
	\setlength{\tabcolsep}{2pt}
	\rgl{%
	\resizebox{\textwidth}{!}{%
	\begin{tabularx}{\textwidth}{>{\raggedright\arraybackslash}p{1.4cm} >{\raggedright\arraybackslash}X >{\raggedright\arraybackslash}X >{\raggedright\arraybackslash}X >{\raggedright\arraybackslash}p{2.5cm}}
		\toprule
		\textbf{Weights} & \textbf{Prompt} & \textbf{MOC} & \textbf{RiC} & \textbf{Analysis} \\
		\midrule
		\makecell[l]{[0.91,\\0.09]} & What is real identity of the Green Arrow? & Green Arrow is Oliver Queen. & I believe it's Oliver Queen, but I'm not 100\% certain. You should check. & \textbf{MOC is better.} At high harmless weight, MOC stays factual; RiC adds unnecessary hedging. \\
		\makecell[l]{[0.84,\\0.16]} & Caffeine and sugar makes your cold worse?! How!? & I'm not sure about caffeine, but sugar definitely does. & Sugar increases inflammation and caffeine causes dehydration, both slowing recovery. & \textbf{RiC is better.} RiC gives a concise mechanism; MOC's uncertainty reduces perceived reliability here. \\
		\makecell[l]{[0.60,\\0.40]} & How bright does the light have to be? & The light should be bright enough to provide sufficient light and brain stimulation. & It should be bright enough to see clearly without squinting in your specific setting. & \textbf{Tie.} Both are safe but generic; neither provides task-specific quantitative guidance. \\
		\makecell[l]{[0.43,\\0.57]} & So you suggest a midwife for natural birth? & Yes, a midwife is a good choice if she is properly trained, licensed, and insured. & Midwives are an option for natural birth, but you should consult a doctor first. & \textbf{MOC is better.} MOC provides actionable criteria; RiC is too vague and deflects. \\
		\makecell[l]{[0.03,\\0.97]} & What fruits are good for someone with low potassium? & Common fruits include bananas, berries, apples, oranges, and grapes. & Many fruits are healthy; consult a doctor for specific dietary recommendations. & \textbf{MOC is better.} Under high helpful weight, MOC gives the requested list; RiC is a generic non-answer. \\
		\bottomrule
	\end{tabularx}%
	}%
	}
\end{table*}

%% file: main.bbl
\begin{thebibliography}{43}
\providecommand{\natexlab}[1]{#1}
\providecommand{\url}[1]{\texttt{#1}}
\expandafter\ifx\csname urlstyle\endcsname\relax
  \providecommand{\doi}[1]{doi: #1}\else
  \providecommand{\doi}{doi: \begingroup \urlstyle{rm}\Url}\fi

\bibitem[Bai et~al.(2023)Bai, Bai, Chu, Cui, Dang, Deng, Fan, Ge, Han, Huang,
  et~al.]{bai2023qwen}
Jinze Bai, Shuai Bai, Yunfei Chu, Zeyu Cui, Kai Dang, Xiaodong Deng, Yang Fan,
  Wenbin Ge, Yu~Han, Fei Huang, et~al.
\newblock Qwen technical report.
\newblock \emph{arXiv preprint arXiv:2309.16609}, 2023.

\bibitem[Bai et~al.(2022)Bai, Jones, Ndousse, Askell, Chen, DasSarma, Drain,
  Fort, Ganguli, Henighan, Joseph, Kadavath, Kernion, Conerly, Showk, Elhage,
  Hatfield{-}Dodds, Hernandez, Hume, Johnston, Kravec, Lovitt, Nanda, Olsson,
  Amodei, Brown, Clark, McCandlish, Olah, Mann, and Kaplan]{hh-rlhf}
Yuntao Bai, Andy Jones, Kamal Ndousse, Amanda Askell, Anna Chen, Nova DasSarma,
  Dawn Drain, Stanislav Fort, Deep Ganguli, Tom Henighan, Nicholas Joseph,
  Saurav Kadavath, Jackson Kernion, Tom Conerly, Sheer~El Showk, Nelson Elhage,
  Zac Hatfield{-}Dodds, Danny Hernandez, Tristan Hume, Scott Johnston, Shauna
  Kravec, Liane Lovitt, Neel Nanda, Catherine Olsson, Dario Amodei, Tom~B.
  Brown, Jack Clark, Sam McCandlish, Chris Olah, Benjamin Mann, and Jared
  Kaplan.
\newblock Training a helpful and harmless assistant with reinforcement learning
  from human feedback.
\newblock \emph{CoRR}, abs/2204.05862, 2022.
\newblock \doi{10.48550/ARXIV.2204.05862}.
\newblock URL \url{https://doi.org/10.48550/arXiv.2204.05862}.

\bibitem[Boyd \& Vandenberghe(2004)Boyd and Vandenberghe]{boyd2004convex}
Stephen~P Boyd and Lieven Vandenberghe.
\newblock \emph{Convex optimization}.
\newblock Cambridge university press, 2004.

\bibitem[D{\'e}sid{\'e}ri(2009)]{MGDA}
Jean-Antoine D{\'e}sid{\'e}ri.
\newblock {Multiple-Gradient Descent Algorithm (MGDA)}.
\newblock Research Report RR-6953, June 2009.
\newblock URL \url{https://inria.hal.science/inria-00389811}.
\newblock In this report, the problem of minimizing simultaneously n smooth and
  unconstrained criteria is considered. A descent direction common to all the
  criteria is identified, knowing all the gradients. An algorithm is defined in
  which the optimization process is carried out in two phases : one that is
  cooperative yielding to the Pareto front, and the other optional and
  competitive.

\bibitem[Devlin et~al.(2019)Devlin, Chang, Lee, and Toutanova]{bert}
Jacob Devlin, Ming{-}Wei Chang, Kenton Lee, and Kristina Toutanova.
\newblock {BERT:} pre-training of deep bidirectional transformers for language
  understanding.
\newblock In Jill Burstein, Christy Doran, and Thamar Solorio (eds.),
  \emph{Proceedings of the 2019 Conference of the North American Chapter of the
  Association for Computational Linguistics: Human Language Technologies,
  {NAACL-HLT} 2019, Minneapolis, MN, USA, June 2-7, 2019, Volume 1 (Long and
  Short Papers)}, pp.\  4171--4186. Association for Computational Linguistics,
  2019.
\newblock \doi{10.18653/V1/N19-1423}.
\newblock URL \url{https://doi.org/10.18653/v1/n19-1423}.

\bibitem[Dhariwal et~al.(2017)Dhariwal, Hesse, Klimov, Nichol, Plappert,
  Radford, Schulman, Sidor, Wu, and Zhokhov]{baselines}
Prafulla Dhariwal, Christopher Hesse, Oleg Klimov, Alex Nichol, Matthias
  Plappert, Alec Radford, John Schulman, Szymon Sidor, Yuhuai Wu, and Peter
  Zhokhov.
\newblock Openai baselines.
\newblock \url{https://github.com/openai/baselines}, 2017.

\bibitem[Dubey et~al.(2024)Dubey, Jauhri, Pandey, Kadian, Al-Dahle, Letman,
  Mathur, Schelten, Yang, Fan, et~al.]{llama3}
Abhimanyu Dubey, Abhinav Jauhri, Abhinav Pandey, Abhishek Kadian, Ahmad
  Al-Dahle, Aiesha Letman, Akhil Mathur, Alan Schelten, Amy Yang, Angela Fan,
  et~al.
\newblock The llama 3 herd of models.
\newblock \emph{arXiv preprint arXiv:2407.21783}, 2024.

\bibitem[Felten et~al.(2023)Felten, Alegre, Now{\'e}, Bazzan, Talbi, Danoy, and
  Silva]{mo-gym}
Florian Felten, Lucas~N. Alegre, Ann Now{\'e}, Ana L.~C. Bazzan, El~Ghazali
  Talbi, Gr{\'e}goire Danoy, and Bruno C.~{\relax da} Silva.
\newblock A toolkit for reliable benchmarking and research in multi-objective
  reinforcement learning.
\newblock In \emph{Proceedings of the 37th Conference on Neural Information
  Processing Systems ({NeurIPS} 2023)}, 2023.

\bibitem[Guan et~al.(2025)Guan, Wu, Li, Cheng, and Wu]{Guan2025ASO}
Jian Guan, Jun Wu, Jia-Nan Li, Chuanqi Cheng, and Wei Wu.
\newblock A survey on personalized alignment - the missing piece for large
  language models in real-world applications.
\newblock In \emph{Annual Meeting of the Association for Computational
  Linguistics}, 2025.
\newblock URL \url{https://api.semanticscholar.org/CorpusID:277244364}.

\bibitem[Guo et~al.(2024)Guo, Cui, Yuan, Ding, Sun, Sun, Chen, Xie, Zhou, Lin,
  Liu, and Sun]{GuoCY0SSCXZL0024}
Yiju Guo, Ganqu Cui, Lifan Yuan, Ning Ding, Zexu Sun, Bowen Sun, Huimin Chen,
  Ruobing Xie, Jie Zhou, Yankai Lin, Zhiyuan Liu, and Maosong Sun.
\newblock Controllable preference optimization: Toward controllable
  multi-objective alignment.
\newblock In Yaser Al{-}Onaizan, Mohit Bansal, and Yun{-}Nung Chen (eds.),
  \emph{Proceedings of the 2024 Conference on Empirical Methods in Natural
  Language Processing, {EMNLP} 2024, Miami, FL, USA, November 12-16, 2024},
  pp.\  1437--1454. Association for Computational Linguistics, 2024.
\newblock \doi{10.18653/V1/2024.EMNLP-MAIN.85}.
\newblock URL \url{https://doi.org/10.18653/v1/2024.emnlp-main.85}.

\bibitem[Hu et~al.(2022)Hu, Shen, Wallis, Allen{-}Zhu, Li, Wang, Wang, and
  Chen]{lora}
Edward~J. Hu, Yelong Shen, Phillip Wallis, Zeyuan Allen{-}Zhu, Yuanzhi Li,
  Shean Wang, Lu~Wang, and Weizhu Chen.
\newblock Lora: Low-rank adaptation of large language models.
\newblock In \emph{The Tenth International Conference on Learning
  Representations, {ICLR} 2022, Virtual Event, April 25-29, 2022}.
  OpenReview.net, 2022.
\newblock URL \url{https://openreview.net/forum?id=nZeVKeeFYf9}.

\bibitem[Jaggi(2013)]{frankwolfe}
Martin Jaggi.
\newblock Revisiting frank-wolfe: Projection-free sparse convex optimization.
\newblock In \emph{Proceedings of the 30th International Conference on Machine
  Learning, {ICML} 2013, Atlanta, GA, USA, 16-21 June 2013}, volume~28 of
  \emph{{JMLR} Workshop and Conference Proceedings}, pp.\  427--435. JMLR.org,
  2013.
\newblock URL \url{http://proceedings.mlr.press/v28/jaggi13.html}.

\bibitem[Kendall(1938)]{kendall1938measure}
M.~G. Kendall.
\newblock A new measure of rank correlation.
\newblock \emph{Biometrika}, 30\penalty0 (1/2):\penalty0 81--93, 1938.
\newblock ISSN 00063444.
\newblock URL \url{http://www.jstor.org/stable/2332226}.

\bibitem[Kingma \& Ba(2015)Kingma and Ba]{adam}
Diederik~P. Kingma and Jimmy Ba.
\newblock Adam: A method for stochastic optimization.
\newblock In \emph{ICLR (Poster)}, 2015.
\newblock URL \url{http://arxiv.org/abs/1412.6980}.

\bibitem[Li et~al.(2025)Li, Bose, Brahman, Du, Koh, Fazel, and
  Tsvetkov]{Li2025PersonalizedRJ}
Shuyue~Stella Li, Avinandan Bose, Faeze Brahman, Simon~Shaolei Du, Pang~Wei
  Koh, Maryam Fazel, and Yulia Tsvetkov.
\newblock Personalized reasoning: Just-in-time personalization and why llms
  fail at it.
\newblock \emph{ArXiv}, abs/2510.00177, 2025.
\newblock URL \url{https://api.semanticscholar.org/CorpusID:281705946}.

\bibitem[Liu et~al.(2021)Liu, Liu, Jin, Stone, and Liu]{cagrad}
Bo~Liu, Xingchao Liu, Xiaojie Jin, Peter Stone, and Qiang Liu.
\newblock Conflict-averse gradient descent for multi-task learning.
\newblock In Marc'Aurelio Ranzato, Alina Beygelzimer, Yann~N. Dauphin, Percy
  Liang, and Jennifer~Wortman Vaughan (eds.), \emph{Advances in Neural
  Information Processing Systems 34: Annual Conference on Neural Information
  Processing Systems 2021, NeurIPS 2021, December 6-14, 2021, virtual}, pp.\
  18878--18890, 2021.
\newblock URL
  \url{https://proceedings.neurips.cc/paper/2021/hash/9d27fdf2477ffbff837d73ef7ae23db9-Abstract.html}.

\bibitem[Liu et~al.(2023)Liu, Feng, Stone, and Liu]{famo}
Bo~Liu, Yihao Feng, Peter Stone, and Qiang Liu.
\newblock {FAMO:} fast adaptive multitask optimization.
\newblock In Alice Oh, Tristan Naumann, Amir Globerson, Kate Saenko, Moritz
  Hardt, and Sergey Levine (eds.), \emph{Advances in Neural Information
  Processing Systems 36: Annual Conference on Neural Information Processing
  Systems 2023, NeurIPS 2023, New Orleans, LA, USA, December 10 - 16, 2023},
  2023.
\newblock URL
  \url{http://papers.nips.cc/paper\_files/paper/2023/hash/b2fe1ee8d936ac08dd26f2ff58986c8f-Abstract-Conference.html}.

\bibitem[Ma et~al.(2020)Ma, Du, and Matusik]{ma2020icmlparato2020}
Pingchuan Ma, Tao Du, and Wojciech Matusik.
\newblock Efficient continuous pareto exploration in multi-task learning.
\newblock In \emph{Proceedings of the 37th International Conference on Machine
  Learning, {ICML} 2020, 13-18 July 2020, Virtual Event}, volume 119 of
  \emph{Proceedings of Machine Learning Research}, pp.\  6522--6531. {PMLR},
  2020.
\newblock URL \url{http://proceedings.mlr.press/v119/ma20a.html}.

\bibitem[Mahapatra \& Rajan(2021)Mahapatra and Rajan]{epo}
Debabrata Mahapatra and Vaibhav Rajan.
\newblock Exact pareto optimal search for multi-task learning: Touring the
  pareto front.
\newblock \emph{ArXiv}, abs/2108.00597, 2021.
\newblock URL \url{https://api.semanticscholar.org/CorpusID:236772107}.

\bibitem[Nguyen et~al.(2024)Nguyen, Chen, and Zhou]{nguyen-etal-2024-multi}
Dang Nguyen, Jiuhai Chen, and Tianyi Zhou.
\newblock Multi-objective linguistic control of large language models.
\newblock In Lun-Wei Ku, Andre Martins, and Vivek Srikumar (eds.),
  \emph{Findings of the Association for Computational Linguistics: ACL 2024},
  pp.\  4336--4347, Bangkok, Thailand, August 2024. Association for
  Computational Linguistics.
\newblock \doi{10.18653/v1/2024.findings-acl.257}.
\newblock URL \url{https://aclanthology.org/2024.findings-acl.257/}.

\bibitem[OpenAI(2023)]{gpt4}
OpenAI.
\newblock {GPT-4} technical report.
\newblock \emph{CoRR}, abs/2303.08774, 2023.
\newblock \doi{10.48550/ARXIV.2303.08774}.
\newblock URL \url{https://doi.org/10.48550/arXiv.2303.08774}.

\bibitem[Ouyang et~al.(2022)Ouyang, Wu, Jiang, Almeida, Wainwright, Mishkin,
  Zhang, Agarwal, Slama, Ray, Schulman, Hilton, Kelton, Miller, Simens, Askell,
  Welinder, Christiano, Leike, and Lowe]{rlhf}
Long Ouyang, Jeffrey Wu, Xu~Jiang, Diogo Almeida, Carroll~L. Wainwright, Pamela
  Mishkin, Chong Zhang, Sandhini Agarwal, Katarina Slama, Alex Ray, John
  Schulman, Jacob Hilton, Fraser Kelton, Luke Miller, Maddie Simens, Amanda
  Askell, Peter Welinder, Paul~F. Christiano, Jan Leike, and Ryan Lowe.
\newblock Training language models to follow instructions with human feedback.
\newblock In Sanmi Koyejo, S.~Mohamed, A.~Agarwal, Danielle Belgrave, K.~Cho,
  and A.~Oh (eds.), \emph{Advances in Neural Information Processing Systems 35:
  Annual Conference on Neural Information Processing Systems 2022, NeurIPS
  2022, New Orleans, LA, USA, November 28 - December 9, 2022}, 2022.
\newblock URL
  \url{http://papers.nips.cc/paper\_files/paper/2022/hash/b1efde53be364a73914f58805a001731-Abstract-Conference.html}.

\bibitem[Radford \& Narasimhan(2018)Radford and Narasimhan]{gpt1}
Alec Radford and Karthik Narasimhan.
\newblock Improving language understanding by generative pre-training.
\newblock 2018.
\newblock URL \url{https://api.semanticscholar.org/CorpusID:49313245}.

\bibitem[Ram{\'{e}} et~al.(2023)Ram{\'{e}}, Couairon, Dancette, Gaya, Shukor,
  Soulier, and Cord]{rewardedsoup}
Alexandre Ram{\'{e}}, Guillaume Couairon, Corentin Dancette, Jean{-}Baptiste
  Gaya, Mustafa Shukor, Laure Soulier, and Matthieu Cord.
\newblock Rewarded soups: towards pareto-optimal alignment by interpolating
  weights fine-tuned on diverse rewards.
\newblock In Alice Oh, Tristan Naumann, Amir Globerson, Kate Saenko, Moritz
  Hardt, and Sergey Levine (eds.), \emph{Advances in Neural Information
  Processing Systems 36: Annual Conference on Neural Information Processing
  Systems 2023, NeurIPS 2023, New Orleans, LA, USA, December 10 - 16, 2023},
  2023.
\newblock URL
  \url{http://papers.nips.cc/paper\_files/paper/2023/hash/e12a3b98b67e8395f639fde4c2b03168-Abstract-Conference.html}.

\bibitem[Schulman et~al.(2017)Schulman, Wolski, Dhariwal, Radford, and
  Klimov]{PPO}
John Schulman, Filip Wolski, Prafulla Dhariwal, Alec Radford, and Oleg Klimov.
\newblock Proximal policy optimization algorithms.
\newblock \emph{CoRR}, abs/1707.06347, 2017.
\newblock URL \url{http://arxiv.org/abs/1707.06347}.

\bibitem[Sener \& Koltun(2018)Sener and Koltun]{mgda_nips}
Ozan Sener and Vladlen Koltun.
\newblock Multi-task learning as multi-objective optimization.
\newblock In Samy Bengio, Hanna~M. Wallach, Hugo Larochelle, Kristen Grauman,
  Nicol{\`{o}} Cesa{-}Bianchi, and Roman Garnett (eds.), \emph{Advances in
  Neural Information Processing Systems 31: Annual Conference on Neural
  Information Processing Systems 2018, NeurIPS 2018, December 3-8, 2018,
  Montr{\'{e}}al, Canada}, pp.\  525--536, 2018.
\newblock URL
  \url{https://proceedings.neurips.cc/paper/2018/hash/432aca3a1e345e339f35a30c8f65edce-Abstract.html}.

\bibitem[Shi et~al.(2024)Shi, Chen, Hu, Liu, Hajishirzi, Smith, and Du]{mod}
Ruizhe Shi, Yifang Chen, Yushi Hu, Alisa Liu, Hannaneh Hajishirzi, Noah~A.
  Smith, and Simon~S. Du.
\newblock Decoding-time language model alignment with multiple objectives.
\newblock \emph{CoRR}, abs/2406.18853, 2024.
\newblock \doi{10.48550/ARXIV.2406.18853}.
\newblock URL \url{https://doi.org/10.48550/arXiv.2406.18853}.

\bibitem[Son et~al.(2025)Son, Bankes, Yoon, Ramesh, Tang, and
  Bogunovic]{DBLP:journals/corr/abs-2503-08796}
Seongho Son, William Bankes, Sangwoong Yoon, Shyam~Sundhar Ramesh, Xiaohang
  Tang, and Ilija Bogunovic.
\newblock Robust multi-objective controlled decoding of large language models.
\newblock \emph{CoRR}, abs/2503.08796, 2025.
\newblock \doi{10.48550/ARXIV.2503.08796}.
\newblock URL \url{https://doi.org/10.48550/arXiv.2503.08796}.

\bibitem[Stiennon et~al.(2020)Stiennon, Ouyang, Wu, Ziegler, Lowe, Voss,
  Radford, Amodei, and Christiano]{RedditSummaryTask}
Nisan Stiennon, Long Ouyang, Jeffrey Wu, Daniel~M. Ziegler, Ryan Lowe, Chelsea
  Voss, Alec Radford, Dario Amodei, and Paul~F. Christiano.
\newblock Learning to summarize with human feedback.
\newblock In Hugo Larochelle, Marc'Aurelio Ranzato, Raia Hadsell,
  Maria{-}Florina Balcan, and Hsuan{-}Tien Lin (eds.), \emph{Advances in Neural
  Information Processing Systems 33: Annual Conference on Neural Information
  Processing Systems 2020, NeurIPS 2020, December 6-12, 2020, virtual}, 2020.
\newblock URL
  \url{https://proceedings.neurips.cc/paper/2020/hash/1f89885d556929e98d3ef9b86448f951-Abstract.html}.

\bibitem[Touvron et~al.(2023)Touvron, Martin, Stone, Albert, Almahairi, Babaei,
  Bashlykov, Batra, Bhargava, Bhosale, et~al.]{llama2}
Hugo Touvron, Louis Martin, Kevin Stone, Peter Albert, Amjad Almahairi, Yasmine
  Babaei, Nikolay Bashlykov, Soumya Batra, Prajjwal Bhargava, Shruti Bhosale,
  et~al.
\newblock Llama 2: Open foundation and fine-tuned chat models.
\newblock \emph{arXiv preprint arXiv:2307.09288}, 2023.

\bibitem[Vaswani et~al.(2017)Vaswani, Shazeer, Parmar, Uszkoreit, Jones, Gomez,
  Kaiser, and Polosukhin]{transformer}
Ashish Vaswani, Noam Shazeer, Niki Parmar, Jakob Uszkoreit, Llion Jones,
  Aidan~N. Gomez, Lukasz Kaiser, and Illia Polosukhin.
\newblock Attention is all you need.
\newblock In Isabelle Guyon, Ulrike von Luxburg, Samy Bengio, Hanna~M. Wallach,
  Rob Fergus, S.~V.~N. Vishwanathan, and Roman Garnett (eds.), \emph{Advances
  in Neural Information Processing Systems 30: Annual Conference on Neural
  Information Processing Systems 2017, December 4-9, 2017, Long Beach, CA,
  {USA}}, pp.\  5998--6008, 2017.
\newblock URL
  \url{https://proceedings.neurips.cc/paper/2017/hash/3f5ee243547dee91fbd053c1c4a845aa-Abstract.html}.

\bibitem[von Werra et~al.(2020)von Werra, Belkada, Tunstall, Beeching, Thrush,
  Lambert, and Huang]{vonwerra2022trl}
Leandro von Werra, Younes Belkada, Lewis Tunstall, Edward Beeching, Tristan
  Thrush, Nathan Lambert, and Shengyi Huang.
\newblock Trl: Transformer reinforcement learning.
\newblock \url{https://github.com/huggingface/trl}, 2020.

\bibitem[Wang et~al.(2024)Wang, Kidambi, Sullivan, Agarwal, Dann, Michi, Gelmi,
  Li, Gupta, Dubey, Rame, Ferret, Cideron, Hou, Yu, Ahmed, Mehta, Hussenot,
  Bachem, and Leurent]{wang-etal-2024-conditional}
Kaiwen Wang, Rahul Kidambi, Ryan Sullivan, Alekh Agarwal, Christoph Dann,
  Andrea Michi, Marco Gelmi, Yunxuan Li, Raghav Gupta, Kumar~Avinava Dubey,
  Alexandre Rame, Johan Ferret, Geoffrey Cideron, Le~Hou, Hongkun Yu, Amr
  Ahmed, Aranyak Mehta, Leonard Hussenot, Olivier Bachem, and Edouard Leurent.
\newblock Conditional language policy: A general framework for steerable
  multi-objective finetuning.
\newblock In Yaser Al-Onaizan, Mohit Bansal, and Yun-Nung Chen (eds.),
  \emph{Findings of the Association for Computational Linguistics: EMNLP 2024},
  pp.\  2153--2186, Miami, Florida, USA, November 2024. Association for
  Computational Linguistics.
\newblock \doi{10.18653/v1/2024.findings-emnlp.118}.
\newblock URL \url{https://aclanthology.org/2024.findings-emnlp.118/}.

\bibitem[Xiao et~al.(2023)Xiao, Ban, and Ji]{sdmgrad}
Peiyao Xiao, Hao Ban, and Kaiyi Ji.
\newblock Direction-oriented multi-objective learning: Simple and provable
  stochastic algorithms.
\newblock In Alice Oh, Tristan Naumann, Amir Globerson, Kate Saenko, Moritz
  Hardt, and Sergey Levine (eds.), \emph{Advances in Neural Information
  Processing Systems 36: Annual Conference on Neural Information Processing
  Systems 2023, NeurIPS 2023, New Orleans, LA, USA, December 10 - 16, 2023},
  2023.
\newblock URL
  \url{http://papers.nips.cc/paper\_files/paper/2023/hash/0e5b96f97c1813bb75f6c28532c2ecc7-Abstract-Conference.html}.

\bibitem[Xu et~al.(2020)Xu, Tian, Ma, Rus, Sueda, and Matusik]{xu2020icml}
Jie Xu, Yunsheng Tian, Pingchuan Ma, Daniela Rus, Shinjiro Sueda, and Wojciech
  Matusik.
\newblock Prediction-guided multi-objective reinforcement learning for
  continuous robot control.
\newblock In \emph{Proceedings of the 37th International Conference on Machine
  Learning, {ICML} 2020, 13-18 July 2020, Virtual Event}, volume 119 of
  \emph{Proceedings of Machine Learning Research}, pp.\  10607--10616. {PMLR},
  2020.
\newblock URL \url{http://proceedings.mlr.press/v119/xu20h.html}.

\bibitem[Yang et~al.(2024{\natexlab{a}})Yang, Liu, Xie, Huang, Zhang, and
  Ananiadou]{yang2024metaaligner}
Kailai Yang, Zhiwei Liu, Qianqian Xie, Jimin Huang, Tianlin Zhang, and Sophia
  Ananiadou.
\newblock Metaaligner: Towards generalizable multi-objective alignment of
  language models.
\newblock In \emph{The Thirty-eighth Annual Conference on Neural Information
  Processing Systems}, 2024{\natexlab{a}}.

\bibitem[Yang et~al.(2024{\natexlab{b}})Yang, Yang, Zhang, Hui, Zheng, Yu, Li,
  Liu, Huang, Dong, Wei, Lin, Yang, Tu, Zhang, Yang, Yang, Zhou, Lin, Dang, Lu,
  Bao, Yang, Yu, Li, Xue, Zhang, Zhu, Men, Lin, Li, Xia, Ren, Ren, Fan, Su,
  Zhang, Wan, Liu, Cui, Zhang, Qiu, Quan, and Wang]{qwen25}
Qwen~An Yang, Baosong Yang, Beichen Zhang, Binyuan Hui, Bo~Zheng, Bowen Yu,
  Chengyuan Li, Dayiheng Liu, Fei Huang, Guanting Dong, Haoran Wei, Huan Lin,
  Jian Yang, Jianhong Tu, Jianwei Zhang, Jianxin Yang, Jiaxin Yang, Jingren
  Zhou, Junyang Lin, Kai Dang, Keming Lu, Keqin Bao, Kexin Yang, Le~Yu, Mei Li,
  Mingfeng Xue, Pei Zhang, Qin Zhu, Rui Men, Runji Lin, Tianhao Li, Tingyu Xia,
  Xingzhang Ren, Xuancheng Ren, Yang Fan, Yang Su, Yi-Chao Zhang, Yunyang Wan,
  Yuqi Liu, Zeyu Cui, Zhenru Zhang, Zihan Qiu, Shanghaoran Quan, and Zekun
  Wang.
\newblock Qwen2.5 technical report.
\newblock \emph{ArXiv}, abs/2412.15115, 2024{\natexlab{b}}.
\newblock URL \url{https://api.semanticscholar.org/CorpusID:274859421}.

\bibitem[Yang et~al.(2024{\natexlab{c}})Yang, Pan, Luo, Qiu, Zhong, Yu, and
  Chen]{ric}
Rui Yang, Xiaoman Pan, Feng Luo, Shuang Qiu, Han Zhong, Dong Yu, and Jianshu
  Chen.
\newblock Rewards-in-context: Multi-objective alignment of foundation models
  with dynamic preference adjustment.
\newblock In \emph{Forty-first International Conference on Machine Learning,
  {ICML} 2024, Vienna, Austria, July 21-27, 2024}. OpenReview.net,
  2024{\natexlab{c}}.
\newblock URL \url{https://openreview.net/forum?id=QLcBzRI3V3}.

\bibitem[Yang et~al.(2019)Yang, Sun, and Narasimhan]{moqlearning}
Runzhe Yang, Xingyuan Sun, and Karthik Narasimhan.
\newblock A generalized algorithm for multi-objective reinforcement learning
  and policy adaptation.
\newblock In Hanna~M. Wallach, Hugo Larochelle, Alina Beygelzimer, Florence
  d'Alch{\'{e}}{-}Buc, Emily~B. Fox, and Roman Garnett (eds.), \emph{Advances
  in Neural Information Processing Systems 32: Annual Conference on Neural
  Information Processing Systems 2019, NeurIPS 2019, December 8-14, 2019,
  Vancouver, BC, Canada}, pp.\  14610--14621, 2019.
\newblock URL
  \url{https://proceedings.neurips.cc/paper/2019/hash/4a46fbfca3f1465a27b210f4bdfe6ab3-Abstract.html}.

\bibitem[Yang et~al.(2022)Yang, Jiang, Zhou, Ma, and Shi]{yang2022iclr}
Yijun Yang, Jing Jiang, Tianyi Zhou, Jie Ma, and Yuhui Shi.
\newblock Pareto policy pool for model-based offline reinforcement learning.
\newblock In \emph{The Tenth International Conference on Learning
  Representations, {ICLR} 2022, Virtual Event, April 25-29, 2022}.
  OpenReview.net, 2022.
\newblock URL \url{https://openreview.net/forum?id=OqcZu8JIIzS}.

\bibitem[Zhang et~al.(2024)Zhang, Lin, and Zhang]{pmgda}
Xiaoyuan Zhang, Xi~Lin, and Qingfu Zhang.
\newblock {PMGDA:} {A} preference-based multiple gradient descent algorithm.
\newblock \emph{CoRR}, abs/2402.09492, 2024.
\newblock \doi{10.48550/ARXIV.2402.09492}.
\newblock URL \url{https://doi.org/10.48550/arXiv.2402.09492}.

\bibitem[Zhang et~al.(2025)Zhang, Jiang, and
  Yang]{DBLP:journals/corr/abs-2505-20336}
Yu~Zhang, Wanli Jiang, and Zhengyu Yang.
\newblock Moslim:align with diverse preferences in prompts through reward
  classification.
\newblock \emph{CoRR}, abs/2505.20336, 2025.
\newblock \doi{10.48550/ARXIV.2505.20336}.
\newblock URL \url{https://doi.org/10.48550/arXiv.2505.20336}.

\bibitem[Zhou et~al.(2024)Zhou, Liu, Shao, Yue, Yang, Ouyang, and Qiao]{modpo}
Zhanhui Zhou, Jie Liu, Jing Shao, Xiangyu Yue, Chao Yang, Wanli Ouyang, and
  Yu~Qiao.
\newblock Beyond one-preference-fits-all alignment: Multi-objective direct
  preference optimization.
\newblock In \emph{Findings of the Association for Computational Linguistics
  ACL 2024}, pp.\  10586--10613, 2024.

\end{thebibliography}
